\documentclass{article}

\usepackage[margin=1in]{geometry}
\usepackage[numbers,sort&compress]{natbib}
\usepackage[utf8]{inputenc}
\usepackage[T1]{fontenc}
\usepackage{lmodern}
\usepackage{courier}  
\usepackage{amsmath,amsthm,amssymb}
\usepackage{algorithm}
\usepackage{algorithmic}
\usepackage{booktabs}
\usepackage{multirow}
\usepackage{makecell}
\usepackage{threeparttable}
\usepackage{siunitx}
\usepackage{graphicx}
\usepackage{colortbl}
\definecolor{bestcell}{RGB}{220,237,220} 
\graphicspath{{figures/}}
\usepackage{xcolor}
\usepackage{url}
\usepackage{hyperref}
\hypersetup{hidelinks,breaklinks=true}
\usepackage{cleveref}
\usepackage{subcaption}
\newcommand{\subfloat}[2][]{\subcaptionbox{#1}{#2}}
\usepackage{wrapfig}
\usepackage{enumitem}
\usepackage{pifont}
\usepackage{microtype}
\usepackage{nicefrac}
\usepackage{needspace}
\usepackage{tikz}
\usetikzlibrary{arrows.meta,positioning,shapes.geometric,calc}

\makeatletter

\def\TPT@opt@flushleft{%
  \def\TPTnoteSettings{%
    \setlength{\leftmargin}{0pt}%
    \setlength{\labelwidth}{0pt}%
    \setlength{\labelsep}{0pt}%
    \setlength{\itemindent}{0pt}%
    \setlength{\listparindent}{0pt}%
    \setlength{\rightmargin}{0pt}%
    \setlength{\rightskip}{0pt}%
    \setlength{\leftskip}{0pt}%
  }%
  \def\TPTnoteLabel##1{}%
  \rightskip\z@skip
  \leftskip\z@skip
}
\makeatother

\newtheorem{theorem}{Theorem}
\newtheorem{proposition}[theorem]{Proposition}
\newtheorem{lemma}[theorem]{Lemma}

\newtheorem{assumption}{Assumption}

\newenvironment{nolinenumbers}{}{}

\newcommand{\expect}{\mathbb{E}}
\newcommand{\fullkv}{\textsc{FullKV}}

\newcommand{\parhead}[1]{\noindent\textbf{#1}\hspace{0.25em}\ignorespaces}

\title{When Does Value-Aware KV Eviction Help? A Fixed-Contract Diagnostic for Non-Monotone Cache Compression}

\author{
  Ruijie Zhang, Haozhe Liang, Da Chang, Li Hu, Fanqi Kong, Huaxiao Yin, and Yu Li\\
  \texttt{jeffy353866@gmail.com}
}

\begin{document}

\maketitle
\begin{abstract}
Long-context LLM inference is bottlenecked by the memory and bandwidth cost of reading large KV caches during decoding. KV compression reduces this cost by keeping only part of the cache, but task accuracy alone does not identify why a selector succeeds or fails. A selector can fail at three steps: it may miss the evidence future decoding needs, give high scores to tokens that do not affect the output, or break related evidence when fitting scores into a small cache. We introduce a fixed-contract diagnostic that holds the selector's setup fixed and changes one decision slot at a time. For value ranking, the probe combines a block's attention mass with the estimated output change from removing it. On LongBench across three models and two budgets, the probe is positive on 72.6\% of positive-margin cells and 32.4\% of nonpositive-margin cells. NeedleBench M-RT at 32k and a RULER 8k check probe support closure under branched retrieval, and a 264-cell sign evaluation separates support recovery and output-value ranking from leverage effects near the boundary. The resulting order is to recover decode-side evidence, rank its output value, and preserve coupled evidence during projection.
\end{abstract}


\section{Introduction}
\label{sec:intro}

Long-context LLM inference moves the bottleneck from prefill to decoding. After prefill, every generated token must read an expanding key-value cache, so latency and memory traffic both rise with context length~\citep{zhang2023ho,li2024snapkv,xiao2024efficient}. KV compression methods for pretrained models reduce this cost by deciding which cache states to retain, read, or store at lower precision~\citep{li2024snapkv,feng2026adakv,qin2025cake,liu2024kivi}, and adjacent decode-aligned and sparse-readout systems modify the access estimator or read substrate itself~\citep{tian2026matters,ahn2026lookaheadkv,bai2026indexcache,yuan2025native}. Yet a fixed-budget task score mixes several failure modes (\Cref{fig:mechanism}). A selector may miss the evidence future decoding consumes, give high scores to tokens that do not affect the output, or break related evidence when fitting scores into a small cache.

\begin{figure}[t]
\centering
\includegraphics[width=0.86\linewidth]{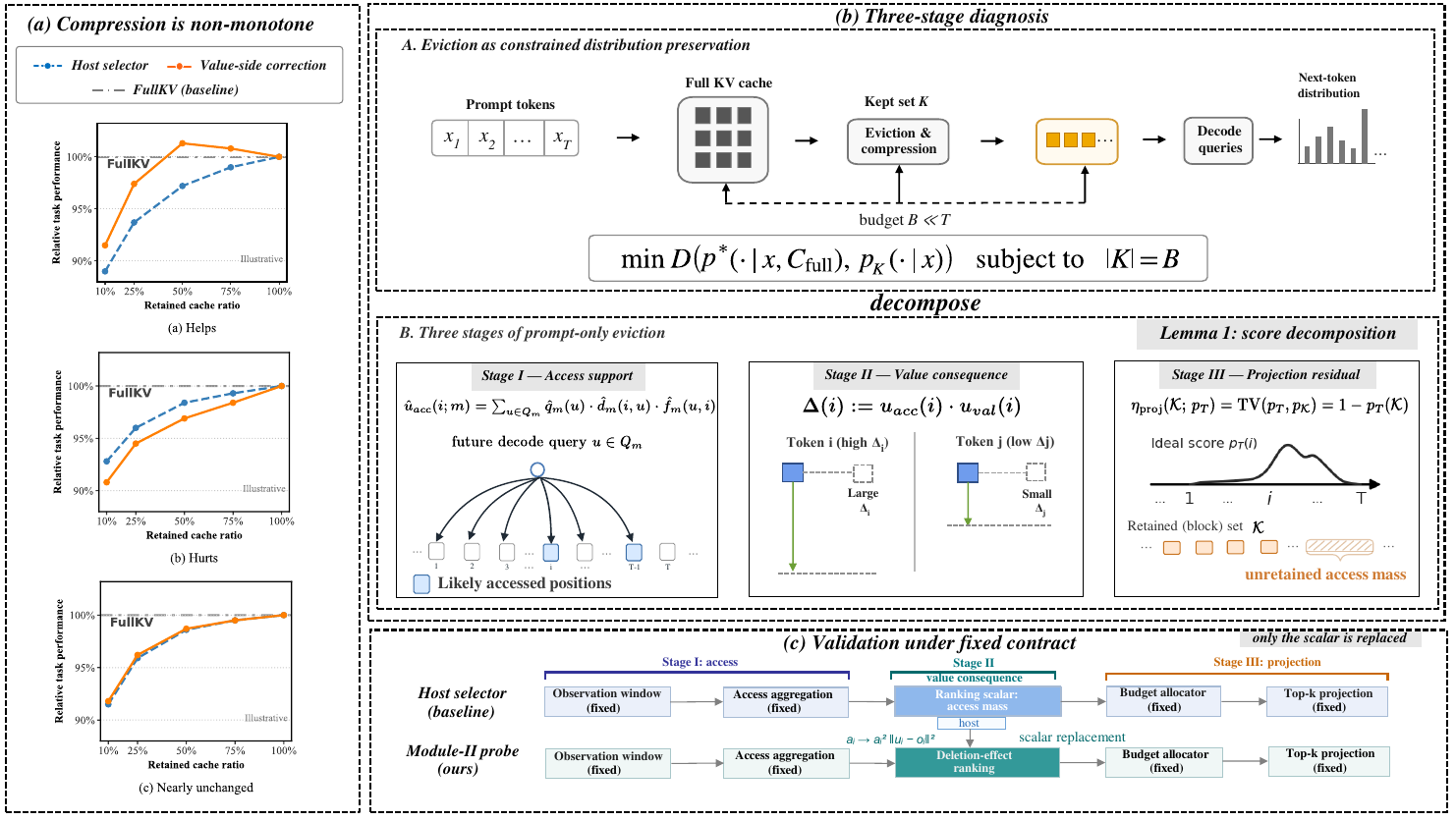}
\caption{Compression is not monotone. A compressed cache can underperform, match, or exceed \fullkv{} depending on which selector stage is the bottleneck. The diagnostic fixes the selector contract and perturbs one ranking slot to localize the failure.}
\label{fig:mechanism}
\end{figure}

Under tight budgets, performance differences between selectors concentrate at a small number of \emph{boundary units} near the budget threshold, while the bulk of the retained set is largely shared. Boundary-swap exposes this concentration. It keeps the model, budget, and all other cache positions fixed, exchanges only the decisions near the threshold, and recovers most of the gap between selectors. At low budget, boundary-rank correctness is the binding constraint.

These boundary errors map to the three selector stages. In access estimation, \emph{query-domain mismatch} appears when historical prefill statistics deviate from the decode queries that actually consume the cache, so early positions receive inflated scores simply from being seen by more subsequent queries. In value-consequence ranking, \emph{output-unaware scoring} treats attention mass as evidence of access without testing whether deletion would change the output, so task-relevant evidence ends up mixed with templates and delimiters. In score-to-cache projection, \emph{projection mismatch} appears when tokenwise ranking under multi-evidence decoding scatters budget across isolated high-score tokens, so evidence coverage stays incomplete and continuous support regions fragment.

The diagnostic fixes the prefill-attention tensor, query domain, observation window, budget, allocation rule, and score-to-cache projection before outcome measurement. A stage-local test changes one slot in that contract. The main test keeps the SnapKV~\citep{li2024snapkv} observation-window contract fixed and substitutes a value-consequence ranking scalar into the same ranking slot to test whether boundary units contain recoverable value-consequence misranking. Cross-contract rows on PyramidKV~\citep{cai2024pyramidkv}, CAKE~\citep{qin2025cake}, Ada-KV~\citep{feng2026adakv}, and H2O~\citep{zhang2023ho} are transfer and sensitivity checks that show how the sign pattern changes when access estimation, allocation, or projection also moves.

Under the fixed selector contract, each experiment supports only the stage whose slot is changed. Boundary swaps first localize tight-budget errors to the score margin and separate query-domain mismatch, output-unaware scoring, and projection mismatch. With the contract fixed, LongBench~\citep{bai2024longbench} controls test the predicted sign split before reading outcomes. The value-consequence intervention helps mainly in cells with positive reference margin and low support coupling, while no-value controls collapse this separation. Token-fill makes unused budget an unlikely standalone Stage~III explanation under the fixed block control, and cross-contract baselines test transfer without being used to rank selectors. NeedleBench~\citep{lineedlebench} and the RULER~\citep{hsieh2024ruler} 8k check probe support closure under branched retrieval targets. A wider 264-cell sign evaluation shows high agreement between no-leverage and final signs, separating support recovery and output-value ranking from leverage as a conditional amplifier.

\section{Staged Diagnostic Framework}
\label{sec:observation}
\label{sec:diagnosis}
\label{sec:method_main}

\parhead{Setup.}
\label{sec:background}
Under tight budgets, a prefill-observable score can fail by estimating the wrong future access support, mis-ranking useful positions, or losing support during budget projection. The pipeline is formalized under a fixed retained-token budget, with notation collected in \Cref{sec:app_notation}.

Let $\mathbf{A}_{l,h}$ denote the layer-head prefill attention matrix, with scalar entry $\mathbf{A}_{l,h}[u,i]$ from query position $u$ to key position $i$, where $T$ is the prompt length and $b\in(0,1)$ the budget ratio. An eviction rule first constructs scalar scores from the observable prefill-attention tensor $\mathbf{A}$ and then projects them to a fixed memory budget,
\begin{equation}
s_i = \psi_{\mathrm{score}}(\mathbf{A}, i), \qquad \mathcal K = \operatorname{TopK}(s, k), \quad k = \lfloor bT \rfloor \le T.
\label{eq:eviction}
\end{equation}
Write $a_u(i):=\operatorname{Agg}_{l,h}\,\mathbf{A}_{l,h}[u,i]$ when head and layer aggregation are not central, and keep tensor form only where cross-layer structure matters. $\operatorname{TopK}$ uses a score-independent deterministic tie break.

\begin{assumption}[Fixed selector contract]
\label{assump:fixed_contract}
Diagnostic comparisons fix the prefill-attention tensor, query or proxy domain, causal mask, observation window, budget $k=\lfloor bT\rfloor$, and projection rule before outcome measurement. Query laws share a zero-extended finite domain, and blockwise variants state the block size and boundary policy.
\end{assumption}
All formal statements in this section use \Cref{assump:fixed_contract} unless a different contract is stated.

For a fixed selector contract and prompt position $i$, write $u_i$ for the latent decode-side utility of retaining $i$, $e_i$ for the access-support exposure factor estimated by $u_{\mathrm{acc}}$, $g_i$ for token-level layer, head, or budget scaling, and $\rho_i$ for structured non-utility mass such as delimiters, records, or format markers. Any scalar prefill score can be decomposed as
\begin{equation}
s_i = e_i\,g_i\,u_i + \rho_i + \xi_i,
\label{eq:score_observation_model}
\end{equation}
with residual $\xi_i:=s_i-e_i g_i u_i-\rho_i$. This representation is definitional. Its role is to mark which factor a stage-local intervention changes. The support-coupling index $\phi(x)$ flags $\rho_i$-type structure, while $\eta_{\mathrm{proj}}$ is defined only after projection. The fixed-predictor split in \Cref{sec:exp_module2} tests Stage~II under the SnapKV~\citep{li2024snapkv} contract by holding access estimation and projection fixed.

\parhead{Why a staged diagnostic is needed.}
Under low budgets, most prompt positions are either safely retained or safely evicted, and the outcome is decided by positions near the keep-or-evict boundary. Boundary swaps recover much of the gap between collapsed cumulative scorers and decode-aligned scorers, whereas random swaps of the same size do not. This locality rules out an explanation based only on global score calibration. The active error is tied to how a selector observes future access, assigns value to accessed states, or projects a score into the finite cache.

\begin{lemma}[Boundary-margin condition]
\label{lem:boundary_margin}
Fix a selector contract with deterministic tie breaking, a base score $s$, and a perturbed score $s'=s+\delta$. Let $j\in\mathcal K_s$ be a retained token and $i\notin\mathcal K_s$ an evicted token under the base kept set $\mathcal K_s=\operatorname{TopK}(s,k)$. The perturbation strictly ranks $i$ above $j$ whenever
\begin{equation}
\delta_i-\delta_j > s_j-s_i.
\label{eq:boundary_margin_condition}
\end{equation}
At equality, the fixed tie break decides. If the downstream utility is locally additive for this one-for-one boundary swap, then replacing $j$ by $i$ changes latent kept-set utility by $u_i-u_j$. Away from the equality case, a score perturbation is beneficial at this boundary when it crosses the base margin and its signed score change agrees with the latent utility gap.
\end{lemma}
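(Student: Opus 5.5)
The argument is a direct unpacking of the definitions in \eqref{eq:eviction}. The only substantive content is comparing scores pairwise before and after the perturbation. I would proceed in three steps: first the ranking inequality, then the tie case, then the utility accounting for the swap.

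\emph{Step 1 (strict re-ranking).} Under the perturbed score, $i$ ranks strictly above $j$ exactly when $s'_i>s'_j$. Substituting $s'=s+\delta$ gives
\begin{equation*}
s_i+\delta_i > s_j+\delta_j \iff \delta_i-\delta_j > s_j-s_i,
\end{equation*}
which is \eqref{eq:boundary_margin_condition}. Since $j\in\mathcal K_s$ and $i\notin\mathcal K_s$, and $\operatorname{TopK}$ orders by score with a deterministic tie break, we have $s_j\ge s_i$. So the right-hand side is a nonnegative base margin, and the condition says that the relative perturbation must overcome it. This is a pure order statement, so no property of $\operatorname{TopK}$ beyond ordering by score is needed.

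\emph{Step 2 (equality).} If $\delta_i-\delta_j=s_j-s_i$, then $s'_i=s'_j$. The relative order of $i$ and $j$ is then set by the tie break. Under \Cref{assump:fixed_contract} that tie break is score-independent, so it is unchanged by $\delta$.

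\emph{Step 3 (utility of the swap).} Consider the one-for-one swap $\mathcal K'=(\mathcal K_s\setminus\{j\})\cup\{i\}$. Local additivity means the kept-set utility restricted to this swap is $U(\mathcal K)=C+\sum_{m\in\mathcal K}u_m$, with $C$ collecting the terms for the untouched positions. Differencing gives $U(\mathcal K')-U(\mathcal K_s)=u_i-u_j$.

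The concluding sentence then follows by combining the steps. Away from equality, the perturbation flips the pair exactly when the margin is crossed, and the flip is beneficial exactly when $u_i>u_j$. Since $s_j-s_i\ge 0$, crossing forces $\delta_i-\delta_j>0$. So "the signed score change agrees with the latent utility gap" means $\operatorname{sign}(\delta_i-\delta_j)=\operatorname{sign}(u_i-u_j)$ together with crossing. Conversely, if the perturbation does not cross the margin, the kept pair is unchanged and the swap contributes nothing.

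\emph{Main obstacle.} The only delicate point is interpretive. A global $\operatorname{TopK}$ re-ranking can move several tokens, so $\mathcal K_{s'}$ need not equal the single-swap set $\mathcal K'$. I would handle this by stating explicitly that the lemma concerns the pairwise order of $(i,j)$ and the isolated swap $\mathcal K'$, which is exactly what "locally additive for this one-for-one boundary swap" stipulates. I would not attempt a claim about $\mathcal K_{s'}$ as a whole.
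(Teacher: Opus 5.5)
Your proposal is correct and follows the paper's proof essentially step for step: the pairwise rewriting $s_i+\delta_i>s_j+\delta_j \iff \delta_i-\delta_j>s_j-s_i$, the fixed tie break at equality, and the additive difference $u_i-u_j$ for the swapped set $(\mathcal K_s\setminus\{j\})\cup\{i\}$. Your remark that $\mathcal K_{s'}$ need not equal that single-swap set is a caveat the paper leaves implicit in the phrase ``for a one-for-one boundary swap,'' and restricting the claim to the pair $(i,j)$, as you do, is the right way to scope it.
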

The proof appears in \Cref{sec:proof_boundary_margin}. This local condition motivates the sign-resolved groups used in the experiments.

\label{sec:diagnostic_decomp}
The diagnostic targets are decode-side access support $u_{\mathrm{acc}}(i)$, value consequence $u_{\mathrm{val}}(i)$ among accessed positions, and projection residual $\eta_{\mathrm{proj}}$ after mapping a score to a fixed token, head, layer, or block budget. They are instantiated by \Cref{eq:access_support_estimators,eq:stageii_first_order,eq:projection_residual}. \Cref{eq:score_observation_model,prop:ordered_substitution,prop:last_prefill_anchor} give the corresponding algebraic checks.

\begin{equation}
\hat u_{\mathrm{acc}}(i;m)
:=
\sum_{u\in\mathcal Q_m} \hat q_m(u)\,\hat d_m(i,u)\,\hat f_m(u,i),
\qquad
u_{\mathrm{acc}}(i)
:=
\sum_{u\in\mathcal Q_m} q^\star(u)\,d^\star(i,u)\,f^\star(u,i),
\label{eq:access_support_estimators}
\end{equation}
Here $\mathcal Q_m$ is the zero-extended causal query domain. Estimator $m$ uses query law $\hat q_m$, exposure correction $\hat d_m$, and pooling kernel $\hat f_m(u,i):=\sum_{l,h}\hat\beta_{l,h,m}(u)\,\mathbf{A}_{l,h}[u,i]$. Starred terms denote the references.

The Stage~I access error then decomposes exactly into query-law, exposure, and aggregation terms.

\begin{proposition}[Ordered-substitution identity]
\label{prop:ordered_substitution}
Under \Cref{assump:fixed_contract}, with estimators defined on the common query domain in \Cref{eq:access_support_estimators}, the access-support estimation error admits the exact decomposition
\begin{equation}
\hat u_{\mathrm{acc}}(i;m)-u_{\mathrm{acc}}(i)
=
\delta_{\mathrm{phase}}(i;m)
+
\delta_{\mathrm{exp}}(i;m)
+
\xi_{\mathrm{acc}}(i;m),
\label{eq:decomp}
\end{equation}
where
\begin{align*}
\delta_{\mathrm{phase}}(i;m) &:= \textstyle\sum_{u\in\mathcal Q_m} \bigl(\hat q_m(u)-q^\star(u)\bigr)\,d^\star(i,u)\,f^\star(u,i), \\
\delta_{\mathrm{exp}}(i;m)   &:= \textstyle\sum_{u\in\mathcal Q_m} \hat q_m(u)\bigl(\hat d_m(i,u)-d^\star(i,u)\bigr)\,f^\star(u,i), \\
\xi_{\mathrm{acc}}(i;m)      &:= \textstyle\sum_{u\in\mathcal Q_m} \hat q_m(u)\,\hat d_m(i,u)\bigl(\hat f_m(u,i)-f^\star(u,i)\bigr).
\end{align*}
\end{proposition}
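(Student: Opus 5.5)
The identity is a telescoping argument applied term by term inside the finite sum over the common query domain $\mathcal Q_m$. \Cref{assump:fixed_contract} guarantees that $\hat q_m$ and $q^\star$ live on the same zero-extended finite domain. So both $\hat u_{\mathrm{acc}}(i;m)$ and $u_{\mathrm{acc}}(i)$ in \Cref{eq:access_support_estimators} are finite sums over the identical index set $\mathcal Q_m$, and their difference can be written as a single sum
\begin{equation*}
\hat u_{\mathrm{acc}}(i;m)-u_{\mathrm{acc}}(i)=\sum_{u\in\mathcal Q_m}\Bigl(\hat q_m(u)\,\hat d_m(i,u)\,\hat f_m(u,i)-q^\star(u)\,d^\star(i,u)\,f^\star(u,i)\Bigr).
\end{equation*}
Query positions outside the support of either law contribute zero, by zero extension. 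This is the only place the fixed-contract assumption is used, and it is the step I would state most carefully. Without a common domain, the two sums would run over different index sets and would have to be padded first.

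Next, for each fixed $u$, I would apply the standard three-factor product telescoping. Write $x=\hat q_m(u)$, $y=\hat d_m(i,u)$, $z=\hat f_m(u,i)$ and $x^\star,y^\star,z^\star$ for the references. The scalar identity is
\begin{equation*}
xyz-x^\star y^\star z^\star=(x-x^\star)\,y^\star z^\star+x\,(y-y^\star)\,z^\star+x\,y\,(z-z^\star).
\end{equation*}
It can be verified by expanding the right-hand side. The cross terms $x y^\star z^\star$ and $x y z^\star$ each appear once with a plus sign and once with a minus sign, so they cancel and leave $xyz-x^\star y^\star z^\star$. The substitution order matters here: query law first, then exposure, then aggregation. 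It is exactly this order that makes the mixed factors match the definitions of $\delta_{\mathrm{phase}}$ (reference $d^\star,f^\star$), $\delta_{\mathrm{exp}}$ (estimated $\hat q_m$, reference $f^\star$), and $\xi_{\mathrm{acc}}$ (estimated $\hat q_m,\hat d_m$).

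Finally, I would sum the pointwise identity over $u\in\mathcal Q_m$. Because the sum is finite, linearity splits it into the three stated sums without any convergence question, which gives \Cref{eq:decomp}. There is no genuine obstacle. The only points needing care are confirming that the common finite domain makes all sums well defined, and noting that the decomposition is exact but order-dependent: a different substitution order gives a different, equally valid split.
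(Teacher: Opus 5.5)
Your proposal is correct and matches the paper's proof. Both write the difference as a single finite sum over the common domain $\mathcal Q_m$ and telescope by substituting one estimated factor at a time (query law, then exposure, then aggregation), which yields exactly $\delta_{\mathrm{phase}}$, $\delta_{\mathrm{exp}}$, and $\xi_{\mathrm{acc}}$ with no approximation; your remark that a different substitution order gives a different but equally valid split is a fair addition that the paper signals only through the name ``ordered substitution.''
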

\emph{Proof in \Cref{sec:proof_ordered_substitution}.} The three terms isolate query-law mismatch, exposure correction, and the layer and head pooling residual under the fixed-order substitution above.

Stage~II reweights accessed positions by their conditional output consequence $u_{\mathrm{val}}(i)$. Define
\begin{equation*}
\begin{aligned}
\Delta(i) &:= u_{\mathrm{acc}}(i)\,u_{\mathrm{val}}(i),\\
\hat\Delta(i;m) &:= \hat u_{\mathrm{acc}}(i;m)\,\hat u_{\mathrm{val}}(i;m),\\
\xi_{\mathrm{val}}(i;m) &:= \hat u_{\mathrm{val}}(i;m)-u_{\mathrm{val}}(i).
\end{aligned}
\end{equation*}
Expanding the product gives
\begin{subequations}\label{eq:stageii_first_order}
\begin{align}
\hat\Delta(i;m) - \Delta(i)
&=
u_{\mathrm{val}}(i)\,\bigl[\delta_{\mathrm{phase}}(i;m)+\delta_{\mathrm{exp}}(i;m)+\xi_{\mathrm{acc}}(i;m)\bigr]
\notag\\
&\quad+
u_{\mathrm{acc}}(i)\,\xi_{\mathrm{val}}(i;m)
+ R_2(i;m),
\label{eq:stageii_first_order:a}\\
R_2(i;m)
&:=
\bigl[\delta_{\mathrm{phase}}(i;m)+\delta_{\mathrm{exp}}(i;m)
\notag\\
&\quad+
\xi_{\mathrm{acc}}(i;m)\bigr]\xi_{\mathrm{val}}(i;m).
\label{eq:stageii_first_order:b}
\end{align}
\end{subequations}
The first-order approximation keeps the linear terms and drops only the residual product. The SnapKV~\citep{li2024snapkv} replacement fixes access estimation and projection, perturbing only the value-consequence scalar and isolating the $u_{\mathrm{acc}}\,\xi_{\mathrm{val}}$ channel.

Stage~III projects $\hat\Delta$ onto a fixed budget. For a kept set $\mathcal K$ and an access law $p_T$ induced by the scoring queries, with $p_T(\mathcal K)>0$, the projection term is directly visible as retained access mass:
\begin{equation}
\eta_{\mathrm{proj}}(\mathcal K;p_T)
:=
\operatorname{TV}\!\left(p_T,\bar p_{\mathcal K}\right)
=
1 - p_T(\mathcal K),
\label{eq:projection_residual}
\end{equation}
where $\bar p_{\mathcal K}$ is $p_T$ restricted to $\mathcal K$ and renormalized, with residual $1$ if $p_T(\mathcal K)=0$. This grows when isolated high-score tokens miss support closure for code, records, or multi-target retrieval. It differs from the block-lattice deficit $\varepsilon_{\mathrm{lat}}$ in \Cref{sec:pap_topk}, which counts unused nominal budget.

\begin{proposition}[Tail-$K$ TV bound]
\label{prop:last_prefill_anchor}
Let $p^\star$ denote the access law of the first decode step, $p_T$ the access law of the last prefill query, $\bar p_{\mathrm{tail},K}$ the law of a tail-$K$ query mixture, and $\bar p_{\mathrm{all}}$ the law of the full prefill mixture. Then
\begin{align}
\operatorname{TV}(p^\star,\bar p_{\mathrm{tail},K}) &\le \operatorname{TV}(p^\star,p_T) + \operatorname{TV}(p_T,\bar p_{\mathrm{tail},K}), \label{eq:tailk_tv_bound}\\
\operatorname{TV}(p^\star,\bar p_{\mathrm{all}}) &\le \operatorname{TV}(p^\star,p_T) + \operatorname{TV}(p_T,\bar p_{\mathrm{all}}). \label{eq:allprefill_tv_bound}
\end{align}
\end{proposition}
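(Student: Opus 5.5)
Both inequalities are instances of the triangle inequality for total variation distance, with the last-prefill law $p_T$ as the intermediate point. The plan is to place all laws on one finite space, check that $\operatorname{TV}$ is a metric there, and then apply the triangle inequality twice.

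\emph{Common space.} By \Cref{assump:fixed_contract}, every law in the statement is a probability distribution over the finite set of key positions $\{1,\dots,T\}$:
\begin{itemize}[leftmargin=1.5em]
\item $p^\star$ is the access law of the first decode query;
\item $p_T$ is the access law of the last prefill query;
\item $\bar p_{\mathrm{tail},K}$ and $\bar p_{\mathrm{all}}$ are convex mixtures of the per-query laws $a_u(\cdot)$.
\end{itemize}
Each $a_u$ is supported on causally visible keys. We zero-extend each $a_u$ to all of $\{1,\dots,T\}$ (for the decode step, to the prompt positions). A convex combination of probability vectors is again a probability vector, so all four objects lie in the simplex $\Delta_T$.

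\emph{Metric property.} On $\Delta_T$ we use
\[
\operatorname{TV}(p,q)=\tfrac12\|p-q\|_1 .
\]
This is half an $\ell_1$ norm, so it is a metric. In particular, for any $p,q,r\in\Delta_T$,
\[
\tfrac12\|p-q\|_1\le \tfrac12\|p-r\|_1+\tfrac12\|r-q\|_1,
\]
which follows coordinatewise from $|x-z|\le|x-y|+|y-z|$ summed over positions.

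\emph{Conclusion.} Taking $p=p^\star$, $r=p_T$ and $q=\bar p_{\mathrm{tail},K}$ gives \Cref{eq:tailk_tv_bound}. Taking $q=\bar p_{\mathrm{all}}$ instead gives \Cref{eq:allprefill_tv_bound}. No property of the mixture weights is used beyond their being a probability vector, so the bounds hold for any tail length $K$.

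\emph{Main obstacle.} The only substantive step is the common-space check. The decode-step law $p^\star$ may put mass on positions that are not prompt tokens, such as generated tokens. If so, we restrict $p^\star$ to prompt positions and renormalize, or we add an absorbing extra atom to the shared finite space. Either choice leaves the triangle inequality intact, and the zero-extension convention of \Cref{assump:fixed_contract} already commits us to such a shared domain.
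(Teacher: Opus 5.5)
Your proposal is correct and takes the same approach as the paper: the triangle inequality for $\operatorname{TV}$, applied twice with $p_T$ as the intermediate law. Your common-space check goes beyond the paper's proof, but only zero-extending all laws to a shared finite space (e.g.\ positions $1,\dots,T+1$) proves the stated inequality, since restricting and renormalizing $p^\star$ would change the law being bounded.
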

\emph{Proof in \Cref{sec:proof_tailk_tv}.} The first term is shared across estimators, while the second is measurable from prefill attention. The bound does not prove decode alignment; it defines the measurable proxy gap used to compare tail-$K$ and detector-based query proxies against $p_T$.

\label{sec:exposure_bias}%
\label{sec:phase_dilution}%
\label{sec:diagnostic}%
\label{sec:confound_decomp}%
\label{sec:boundary_swap_main}%
\label{sec:niah}%
\noindent\textbf{Two coupled access-support distortions.}\nobreak\hspace{0.25em}Exposure Bias is a count distortion. Earlier positions accumulate larger cumulative attention because more causal prefill queries can see them~\citep{gu2025ahakv,zhao2025attentiondebiasing}. Count-debiasing lifts cumulative scoring, but can expose a reverse-depth failure on needle retrieval, where the kept set drifts toward merely recent positions. Appendix~\ref{sec:appendix_exposure_bias_theory} gives the full analysis.

Phase Dilution is a query-law distortion. Eviction serves future decode queries, whereas pooled prefill scorers average over formatting-heavy prompt queries. Tail-window and lookahead-style methods~\citep{tian2026matters,ahn2026lookaheadkv,wang2025lookahead} reduce this mismatch. Removing prefix over-counting exposes the tailward mismatch that raw cumulative scoring had partially masked, and boundary swaps confirm it by recovering most of the gap between H2O~\citep{zhang2023ho} and SnapKV~\citep{li2024snapkv} on disagreement-boundary units.

\parhead{Architecture-conditioned residual.}
\label{sec:residual}
After support-estimation repair, $\xi_{\mathrm{acc}}(i;m)$ becomes a conservative remainder. On LongBench~\citep{bai2024longbench} the decode-aligned family collapses into a narrow band, so this residual is small relative to the gap between cumulative and decode-aligned support. Head-aggregation dilution is one instance, with details in Appendix~\ref{sec:appendix_head_aggregation}.

\section{Fixed-Contract Diagnostic}
\label{sec:pap_topk}

A scalar eviction score entangles future-query approximation, value consequence, and score projection. Stage-local claims require fixing the selector contract before outcome measurement.

\parhead{Selector contract.}
A contract specifies the prefill attention tensor, query or proxy domain, causal mask, observation window, budget, allocation rule, and final projection. Two scores are comparable as a Stage~II intervention only if they share this contract and differ only in the value-consequence ranking slot. Comparisons that change the query law, layer or head budget, or projection rule are transfer and sensitivity checks, not clean Stage~II evidence.

\parhead{Stage~I access support.}
The clean experiment uses SnapKV's~\citep{li2024snapkv} observation-window access proxy, whose explicit query law, per-head allocation, deterministic top-$k$ projection, and scalar ranking slot make the Stage~II replacement identifiable. H2O-style cumulative scoring~\citep{zhang2023ho} and count-debiased variants change the access law and diagnose Stage~I instead.

Blocks are the selection and projection units for original KV states. Block centroids are only score features for deletion consequence.

Let the prompt length be $T$ and the token budget be $k=\lfloor bT \rfloor$. Cache positions are partitioned into contiguous blocks
\[
\mathcal{B}=\{c_1,\dots,c_{N_b}\},\qquad |c_j|=p \ \text{for}\ j<N_b,\qquad |c_{N_b}|\le p,
\qquad
k_b=\left\lfloor \frac{k}{p}\right\rfloor .
\]
The construction scores a small selected layer set $\mathcal{L}_\star$ with layer weights $\beta_l$, typically drawn from late or plateau layers, and uses a proxy bank
\[
\mathcal{U} = \mathcal{U}^{\mathrm{tail}} \cup \mathcal{U}^{\mathrm{anchor}},
\qquad
\mathcal{U} = \bigsqcup_{m=1}^{M} \mathcal{U}_m,
\]
where $\mathcal{U}^{\mathrm{tail}}$ contains prompt-tail samples and $\mathcal{U}^{\mathrm{anchor}}$ contains instruction, question, or field-boundary anchors. When no explicit multi-target structure exists, $M=1$. Recency weighting is applied only inside the tail stratum,
\begin{equation}
\tilde r_u =
\begin{cases}
\exp\!\left(-\dfrac{T-u}{\tau_q}\right), & u\in \mathcal{U}^{\mathrm{tail}},\\[6pt]
1, & u\in \mathcal{U}^{\mathrm{anchor}},
\end{cases}
\qquad
r_u=\dfrac{\tilde r_u}{\sum_{v\in \mathcal{U}}\tilde r_v}.
\label{eq:pap_principle}
\end{equation}
These are the scoring objects for the value-channel replacement and projection controls, and the test keeps the host observation window and projection unchanged.

\parhead{Stage~II value-consequence replacement.}
Let $\kappa(h)$ map each query head to its KV head. For selected layer $l\in\mathcal{L}_\star$, proxy $u\in \mathcal{U}$, and query head $h$, the full attention output is
\[
\mathbf{o}_{l,h,u}=\sum_{i=1}^{T} \mathbf{A}_{l,h}[u,i]\,\mathbf{V}_{l,\kappa(h),i}.
\]
For a block $c\in\mathcal{B}$, define the block attention mass and attention-weighted value centroid as
\[
a_{l,h,u}(c)=\sum_{i\in c} \mathbf{A}_{l,h}[u,i],
\qquad
\boldsymbol{\mu}_{l,h,u}(c)=
\frac{\sum_{i\in c} \mathbf{A}_{l,h}[u,i]\,\mathbf{V}_{l,\kappa(h),i}}
{\max\{a_{l,h,u}(c),\varepsilon_\mu\}}.
\]
The Stage~II replacement uses deletion consequence as the ranking signal for already supported regions. With $\gamma_{\varepsilon,l,h,u}(c):=\max\{1-a_{l,h,u}(c),\varepsilon_a\}$, the surrogate is
\[
d_{l,h,u}(c)=
\underbrace{\left(
\frac{a_{l,h,u}(c)}{\gamma_{\varepsilon,l,h,u}(c)}
\right)^2}_{\text{support strength}}
\underbrace{\left\|
\boldsymbol{\mu}_{l,h,u}(c)-\mathbf{o}_{l,h,u}
\right\|_2^2}_{\text{conditional output consequence}}.
\]
The support factor matters only after the access proxy is placed on the relevant region, and clipping avoids the singular limit as $a$ approaches $1$. Aggregating over proxy groups gives
\[
D_m(c)=
\sum_{u\in \mathcal{U}_m} r_u
\sum_{l\in\mathcal{L}_\star}\beta_l
\sum_h d_{l,h,u}(c).
\]
The main-text score then pools these group-wise deletion costs,
\begin{equation}
S(c)=\sum_{m=1}^{M} w_m D_m(c),
\qquad
M=1,\; w_m=1 \text{ by default.}
\label{eq:pap_moduleii_score}
\end{equation}
The controlled replacement uses \Cref{eq:pap_moduleii_score} in the same ranking slot as the host selector, while the observation window, budget, allocation, and projection are unchanged. Robust aggregation and multi-target reserves are appendix extensions.

\parhead{Stage~III projection controls.}
Stage~III uses a block-constant posterior approximation $\hat\pi_i=\hat\pi_{b(i)}$, so the strict default form keeps the top $k_b$ blocks under the block budget:
\begin{equation}
\mathcal{Z}^\star = \operatorname{TopK}\bigl(\{S(c)\}_{c\in\mathcal B},\,k_b\bigr),
\qquad
\mathcal K = \bigcup_{c\in \mathcal{Z}^\star} c.
\label{eq:pap_block_topk}
\end{equation}
\Cref{eq:pap_block_topk} is the default Stage~III projection. It replaces the token-level keep posterior $\pi_i:=\Pr(z_i{=}1\mid\mathbf{A},k)$ by a block-constant proxy and projects under $k_b=\lfloor k/p\rfloor$. Strict block TopK leaves a lattice residual $\varepsilon_{\mathrm{lat}}(k,p):=k-pk_b$, distinct from the access-mass residual $\eta_{\mathrm{proj}}(\mathcal K;p_T)$ in \Cref{eq:projection_residual}. Token-fill removes the lattice residual without changing block order.

\parhead{Stage roles.}
The clean Stage~II test fixes access support and projection and changes only the value-consequence ranking slot. Block TopK, token-fill, grouped proxy reserves, and multi-target variants are projection controls. Cost and lattice variants are in \Cref{sec:app_algorithm}.

\section{Experiments}
\label{sec:experiments}

The experiments test the predicted sign of the value-consequence channel. The host selector, model, task, budget, sample set, observation window, allocation, and projection are fixed, and only the ranking scalar changes. Cross-contract baselines mark where attribution no longer holds. Appendix~\ref{sec:app_stage_registry} gives the full stage-to-evidence map and contract coverage.

\parhead{Setup.}
\label{sec:setup}
We evaluate Qwen3-8B~\citep{yang2025qwen3}, Llama-3.1-8B-Instruct~\citep{grattafiori2024llama}, and Mistral-7B-Instruct-v0.3~\citep{jiang2024mistral} on English LongBench~\citep{bai2024longbench} at $b\in\{0.05,0.10\}$, using 100 samples per task-model-budget cell and an 8192-token cap. Runs used eight NVIDIA A100-80GB GPUs. The 16 tasks, three models, and two budgets give a 96-cell grid. In every cell, \fullkv{}, SnapKV, the Module~II replacement (MII), and NoLev use the same model, task, budget, precision, context cap, and sample set. NoLev is the no-leverage MII variant: it keeps MII's value-consequence ranking scalar and the fixed selector contract, but replaces the leverage multiplier $(1-a)^{-2}$ with one.

We implement the observation-window contract with SnapKV~\citep{li2024snapkv} because its prefill query law, per-head allocation, and deterministic projection are explicit and leave one scalar ranking slot exposed.

\parhead{Stage~II evaluation.}
\label{sec:exp_module2}
Under this contract, the intervention changes only the value-consequence ranking scalar. MII replaces the identity score with the value-consequence score, and NoLev is the same replacement without the leverage multiplier. With access and projection fixed, \Cref{lem:boundary_margin} predicts lift only where the boundary still contains recoverable value-ranking error.

At block level, \Cref{fig:prefill_block_shift} shows how the Stage~II replacement reallocates retained mass without changing the observation-window frame. The difference map identifies the prefill blocks whose retained fraction diverges from SnapKV under the same selector contract.

\par\vspace{0.10em}
\begin{nolinenumbers}
\noindent\begin{minipage}{\textwidth}
\centering
\captionsetup{type=figure,hypcap=false,skip=2pt}
\setlength{\tabcolsep}{2pt}
\begin{tabular}{@{}c c c c c@{}}
\begin{minipage}[c]{0.27\linewidth}
\centering
\includegraphics[height=0.92in]{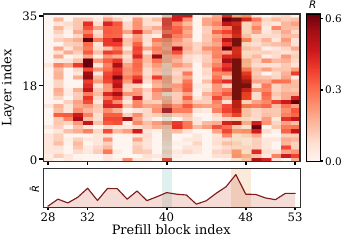}\\[-0.2em]
{\footnotesize (a) SnapKV mass.}
\end{minipage}
&
\begin{minipage}[c]{0.075\linewidth}
\centering
\begin{tikzpicture}
\path[use as bounding box] (-0.20in,-0.13in) rectangle (0.16in,0.13in);
\coordinate (arrstart) at (-0.20in,0);
\coordinate (arrend) at (0.16in,0);
\node[font=\scriptsize,inner sep=1pt,overlay] at (-0.02in,0.075in) {\makecell[c]{value channel}};
\draw[-{Latex[length=2mm,width=1.5mm]},thick] (arrstart) -- (arrend);
\node[font=\scriptsize,inner sep=1pt,overlay] at (-0.02in,-0.075in) {\makecell[c]{re-rank}};
\end{tikzpicture}
\end{minipage}
&
\begin{minipage}[c]{0.27\linewidth}
\centering
\includegraphics[height=0.92in]{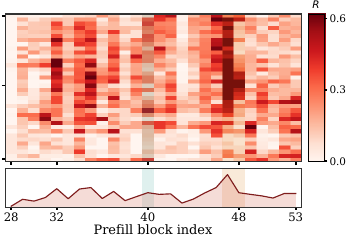}\\[-0.2em]
{\footnotesize (b) Stage~II mass.}
\end{minipage}
&
\begin{minipage}[c]{0.075\linewidth}
\centering
\begin{tikzpicture}
\path[use as bounding box] (-0.18in,-0.13in) rectangle (0.18in,0.13in);
\coordinate (arrstart) at (-0.18in,0);
\coordinate (arrend) at (0.18in,0);
\node[font=\scriptsize,inner sep=1pt,overlay] at (0,0.075in) {\makecell[c]{subtract}};
\draw[-{Latex[length=2mm,width=1.5mm]},thick] (arrstart) -- (arrend);
\node[font=\scriptsize,inner sep=1pt,overlay] at (0,-0.075in) {\makecell[c]{\phantom{re-rank}}};
\end{tikzpicture}
\end{minipage}
&
\begin{minipage}[c]{0.27\linewidth}
\centering
\includegraphics[height=0.92in]{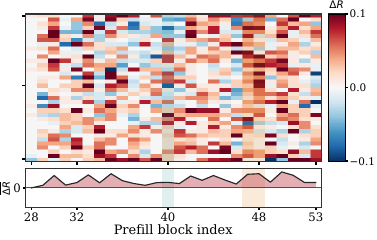}\\[-0.2em]
{\footnotesize (c) Difference map $\Delta R$.}
\end{minipage}
\end{tabular}
\caption{Stage~II value-channel re-ranking shifts retained mass across prefill blocks (Qwen3-8B, HotpotQA, $b=0.05$). Color denotes retained-token fraction in (a,b) and signed retained-mass change in (c); lower traces average the heatmap over layers.}
\label{fig:prefill_block_shift}
\end{minipage}
\end{nolinenumbers}
\par\vspace{0.05em}

The grid-level test groups cells by support coupling $\phi$ and signed reference-margin side $H_c$, both fixed before outcome measurement. The signed reference margin is $m_c=\fullkv{}_c-\mathrm{Host}_c$, and $H_c=\operatorname{sign}(m_c)$ gives the positive or nonpositive split. $\mathrm{Host}_c$ is the identity selector under the fixed contract for cell $c$. The support-coupling score $\phi$ identifies structured-support inputs whose evidence is distributed across dependent states. \Cref{fig:stageii_conditional} puts the fixed predictor and measured Stage~II shift on the same grid. Positive margin aligns with positive shift, and removing the value-consequence channel collapses the favorable-cell lift. Aggregated over the grid, the replacement is positive on $72.6\%$ of positive-margin cells and $32.4\%$ of nonpositive-margin cells ($2.2{\times}$). Within positive margin, low coupling gives a $76.4\%$ positive-shift rate at $+0.69$\,pp, $1.8{\times}$ the high-coupling rate ($42.9\%$, $-0.59$\,pp). \Cref{tab:mii_headroom} reports the signed reference-margin rates. Appendix~\ref{sec:app_prereg_provenance} gives the full $\phi\times H_c$ grid.

\par\noindent\begin{minipage}{\textwidth}
\captionsetup{type=figure,hypcap=false,skip=3pt}
\centering
\begin{minipage}[t]{0.335\textwidth}
\centering
\includegraphics[height=0.86in]{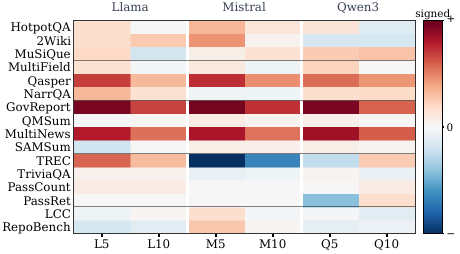}\\
{\footnotesize (a) Predicted signed reference margin.}
\end{minipage}\hfill
\begin{minipage}[t]{0.335\textwidth}
\centering
\includegraphics[height=0.86in]{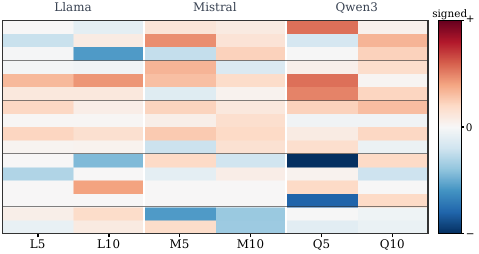}\\
{\footnotesize (b) Measured Stage~II lift.}
\end{minipage}\hfill
\begin{minipage}[t]{0.245\textwidth}
\centering
\includegraphics[height=0.86in]{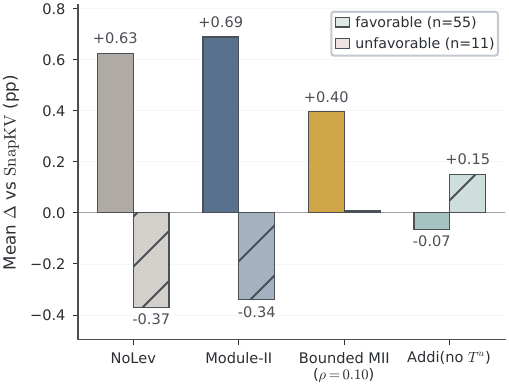}\\
{\footnotesize (c) Lift collapses without value channel.}
\end{minipage}
\caption{Stage~II prediction and value-channel control under a fixed observation-window contract. Panels (a) and (b) share rows for LongBench~\citep{bai2024longbench} tasks and columns for model-budget pairs. Heatmap color is signed percentage-point scale after normalization. Panel (a) is fixed before outcome measurement, panel (b) is the measured Stage~II shift, and panel (c) removes $T^u$ from the ranking scalar.}
\label{fig:stageii_conditional}
\end{minipage}
\par\vspace{0.2em}

The value-channel swap helps when the boundary still contains value-ranking error and weakens when the identity branch is already favorable or structurally incomplete. Nonpositive-margin cells leave no recoverable Stage~II room under the host reference, or point to a Stage~I or Stage~III bottleneck under the fixed contract.

\par\nopagebreak\vspace{0.05em}
\begin{nolinenumbers}
\noindent\begin{minipage}{\textwidth}
\centering
\captionsetup{type=table,hypcap=false,skip=2pt}
\caption{Signed reference margin separates positive from nonpositive Module~II shifts under the fixed SnapKV contract.}
\label{tab:mii_headroom}
\scriptsize
\setlength{\tabcolsep}{2pt}
\renewcommand{\arraystretch}{1.05}
\begin{tabular*}{\textwidth}{@{\extracolsep{\fill}}l l S[table-format=2.0] S[table-format=2.1] S[table-format=2.1] S[table-format=2.1] S[table-format=2.1] S[table-format=+1.2] l@{}}
\toprule
& & & \multicolumn{4}{c}{\textbf{Directional match (\%)}} & & \\
\cmidrule(lr){4-7}
\textbf{Margin side} & \textbf{Pred.} & \textbf{Cells} & \textbf{All} & \textbf{Llama} & \textbf{Mistral} & \textbf{Qwen3} & \textbf{Mean \(\Delta\) (pp)} & \textbf{\(p\)} \\
\midrule
$H_c>0$ ($m_c>0$) & $\Delta > 0$ & 62 & 72.6 & 61.9 & 80.0 & 76.2 & +0.54 & $2.5{\times}10^{-4}$ \\
$H_c\le0$ ($m_c\le0$) & $\Delta \le 0$ & 34 & 67.6 & 54.5 & 75.0 & 72.7 & -0.44 & $2.9{\times}10^{-2}$ \\
\bottomrule
\end{tabular*}
\end{minipage}
\end{nolinenumbers}
\par\vspace{0.10em}

\parhead{Falsifiers and robustness.}
\label{sec:exp_confirmatory}
All confirmatory splits are fixed before the outcome is read. The support coupling $\phi(x)$ is a structure-coupling grammar score; $H_c=\operatorname{sign}(m_c)$ is the signed reference-margin split. Support coupling is task-level, while $m_c$ and $H_c$ are computed per cell. \Cref{tab:confirmatory_headroom} estimates $H_c$ on one half of each cell and $\Delta_c$ on the other, so the split cannot be tuned to the same outcomes it is asked to predict. The main check is ordering: noise from disjoint estimation may shrink the gap, but the positive-margin bucket should stay above the nonpositive-margin bucket. It does, across discovery, sample-disjoint, held-out, and threshold-sweep rows. \Cref{sec:app_confirmatory_tables,sec:app_phi_within_task} give the bootstrap, sensitivity, permutation, per-baseline, and prompt-level checks.

\Cref{tab:stage_ablation} removes the leverage multiplier. Across 264 nonduplicated matched cells, NoLev and final MII signs agree on 239 cells (90.5\%) after binning changes against SnapKV as positive or nonpositive. The leverage residual, MII minus NoLev, matches the final MII sign on 162 cells (61.4\%), rescues 12 NoLev-nonpositive cells, and breaks 13 NoLev-positive cells. In this binary diagnostic, the value-consequence score has higher sign agreement with final MII than the leverage residual. Leverage changes a smaller set of boundary cases. Appendix~\ref{sec:app_value_channel_negative_control} runs the stronger no-value ablation by removing $T^u$ itself. That separation collapses. NoLev controls leverage rather than value. Appendix~\ref{sec:app_boundary_split_controls} gives the boundary-level check, and Appendix~\ref{sec:app_stage_registry} lists the supporting rows.

\begin{table}[t]
\begin{nolinenumbers}
\centering
\caption{Confirmatory reference-margin checks preserve the ordering gap under disjoint and held-out splits. Extends \Cref{tab:mii_headroom} with interval estimates for the same signed reference-margin logic.}
\label{tab:confirmatory_headroom}
\scriptsize
\setlength{\tabcolsep}{2pt}
\renewcommand{\arraystretch}{1.03}
\begin{threeparttable}
\newcommand{\cellci}[2]{\makecell[c]{#1\\{\tiny #2}}}
\begin{tabular*}{\textwidth}{@{\extracolsep{\fill}}l S[table-format=2.0] c c c c@{}}
\toprule
& & \multicolumn{2}{c}{\textbf{$m_c>\tau$}} & \multicolumn{2}{c}{\textbf{$m_c\le -\tau$}} \\
\cmidrule(lr){3-4}\cmidrule(lr){5-6}
\textbf{Analysis} & \textbf{Cells} & \textbf{Match (\%)} & \textbf{Mean \(\Delta\) (pp)} & \textbf{Match (\%)} & \textbf{Mean \(\Delta\) (pp)} \\
\midrule
Discovery, fixed $H_c$ ($\tau = 0$) & 96 & 72.6 & \cellci{+0.54}{[$+0.22$, $+0.83$]} & 67.6 & \cellci{-0.44}{[$-0.80$, $-0.10$]} \\
\midrule
Sample-disjoint, raw ($\tau = 0$) & 96 & \cellci{64.0}{[54.6, 73.1]} & \cellci{+0.32}{[$+0.06$, $+0.55$]} & \cellci{59.8}{[46.2, 72.8]} & \cellci{-0.20}{[$-0.45$, $+0.04$]} \\
Sample-disjoint, margin ($|m_c| \ge 0.5$ pp) & 96 & \cellci{68.7}{[60.5, 76.2]} & \cellci{+0.39}{[$+0.18$, $+0.60$]} & \cellci{67.0}{[56.5, 76.8]} & \cellci{-0.25}{[$-0.46$, $-0.04$]} \\
Sample-disjoint, mechanism & 96 & \cellci{69.3}{[60.6, 77.1]} & \cellci{+0.48}{[$+0.21$, $+0.75$]} & \cellci{55.6}{[44.7, 66.4]} & \cellci{-0.07}{[$-0.37$, $+0.22$]} \\
\midrule
Held-out, low-budget & 96 & \cellci{67.6}{[57.4, 76.8]} & \cellci{+0.31}{[$+0.11$, $+0.51$]} & \cellci{64.2}{[53.4, 73.9]} & \cellci{-0.18}{[$-0.34$, $-0.02$]} \\
Held-out, prompt split & 96 & \cellci{66.9}{[56.8, 75.7]} & \cellci{+0.30}{[$+0.05$, $+0.55$]} & \cellci{65.4}{[53.4, 75.9]} & \cellci{-0.17}{[$-0.40$, $-0.01$]} \\
\bottomrule
\end{tabular*}
\begin{tablenotes}[flushleft]
\footnotesize
\item Each mean cell reports the point estimate on the first line and the reported confidence interval on the second line. The favorable invariant is ordering: the predicted-positive bucket should remain above the predicted-nonpositive bucket even when the split becomes noisier.
\end{tablenotes}
\end{threeparttable}
\end{nolinenumbers}
\vspace{-0.4em}
\end{table}

\begin{table}[t]
\begin{nolinenumbers}
\centering
\caption{Removing leverage weakens the effect but keeps the signed reference-margin ordering. Companion to \Cref{tab:mii_headroom}.}
\label{tab:stage_ablation}
\scriptsize
\setlength{\tabcolsep}{3.5pt}
\renewcommand{\arraystretch}{1.03}
\begin{tabular*}{\textwidth}{@{\extracolsep{\fill}}l S[table-format=2.1] S[table-format=+1.2] S[table-format=2.1] S[table-format=+1.2]@{}}
\toprule
& \multicolumn{2}{c}{\textbf{SnapKV + NoLev}} & \multicolumn{2}{c}{\textbf{SnapKV + full replacement}} \\
\cmidrule(lr){2-3}\cmidrule(lr){4-5}
\textbf{Group} & \textbf{Positive shift (\%)} & \textbf{Mean \(\Delta\) from SnapKV (pp)} & \textbf{Positive shift (\%)} & \textbf{Mean \(\Delta\) from SnapKV (pp)} \\
\midrule
All 96 cells           & 57.3 & +0.15 & 58.3 & +0.20 \\
$H_c > 0$                & 69.4 & +0.48 & 72.6 & +0.54 \\
$H_c \le 0$             & 29.4 & -0.34 & 32.4 & -0.44 \\
low-$\phi$, $H_c > 0$    & 74.5 & +0.61 & 76.4 & +0.69 \\
high-$\phi$, $H_c > 0$   & 28.6 & -0.42 & 42.9 & -0.59 \\
low-$\phi$, $H_c \le 0$ & 30.4 & -0.41 & 34.8 & -0.49 \\
high-$\phi$, $H_c \le 0$ & 27.3 & -0.29 & 27.3 & -0.34 \\
\bottomrule
\end{tabular*}
\end{nolinenumbers}
\vspace{-0.4em}
\end{table}



\Needspace{8\baselineskip}
\parhead{Contract-drift evaluation.}
\label{sec:comparison_surfaces}
SnapKV~\citep{li2024snapkv} is the identity branch for one observation-window contract. \Cref{tab:baseline_viability} tests transfer when other selectors change access estimation, allocation, or projection. PyramidKV~\citep{cai2024pyramidkv}, CAKE~\citep{qin2025cake}, and Ada-KV~\citep{feng2026adakv} move along Stage~III by changing pyramidal layer budget, layer cascade, or head budget. H2O-debiased~\citep{zhao2025attentiondebiasing} and H2O~\citep{zhang2023ho} move along Stage~I by changing cumulative-support estimation.
For a drift surface $B$, $m_B=\fullkv{}-B$ is the contract-local reference margin and $H_B=\operatorname{sign}(m_B)$ gives the split.

\par\vspace{0.10em}
\begin{nolinenumbers}
\noindent\begin{minipage}{\textwidth}
\centering
\captionsetup{type=table,hypcap=false,skip=2pt}
\caption{Value-channel lift transfers across Stage~III drift and weakens on Stage~I surfaces in the cross-contract evaluation. Baselines are SnapKV~\citep{li2024snapkv}, PyramidKV~\citep{cai2024pyramidkv}, CAKE~\citep{qin2025cake}, Ada-KV~\citep{feng2026adakv}, and H2O~\citep{zhang2023ho}.}
\label{tab:baseline_viability}
\scriptsize
\setlength{\tabcolsep}{2pt}
\renewcommand{\arraystretch}{1.02}
\begin{tabular*}{\textwidth}{@{\extracolsep{\fill}}l S[table-format=+2.2] S[table-format=+2.2] S[table-format=2.0] S[table-format=2.0] c p{0.18\textwidth}@{}}
\toprule
& \multicolumn{2}{c}{\textbf{Surface gap (pp)}} & \multicolumn{2}{c}{\textbf{Contract-local margin counts}} & & \\
\cmidrule(lr){2-3}\cmidrule(lr){4-5}
\textbf{Surface $B$} & \textbf{$B-\mathrm{SnapKV}$} & \textbf{$B-\fullkv{}$} & \textbf{$H_B>0$} & \textbf{$H_B\le0$} & \textbf{Mean \(\Delta_B\) (pp)} & \textbf{Dominant stage} \\
\midrule
SnapKV    & +0.00  & -1.79  & 62 & 34 & +0.19 & Stage~II \\
PyramidKV & -2.50  & -4.37  & 73 & 23 & +0.84 & Stage~III, layer \\
CAKE      & -4.51  & -6.30  & 78 & 18 & +0.30 & Stage~III, cascade \\
Ada-KV    & -2.11  & -3.90  & 76 & 20 & +0.15 & Stage~III, head \\
H2O-deb.  & -7.68  & -9.47  & 83 & 13 & \multicolumn{1}{c}{--} & Stage~I, debiased \\
H2O       & -24.81 & -26.60 & 91 &  5 & \multicolumn{1}{c}{--} & Stage~I, cumulative \\
\bottomrule
\end{tabular*}
\end{minipage}
\end{nolinenumbers}
\par\vspace{0.10em}

\Cref{tab:baseline_viability} shows the contract-drift evaluation. The Stage~III adapter lifts follow the stage map, with $+0.84$\,pp on PyramidKV, $+0.30$\,pp on CAKE, and $+0.15$\,pp on Ada-KV. These rows test drift sensitivity, not global selector rank. Value-consequence information transfers when access support remains comparable, and lift decays as Stage~III absorbs or distorts that signal. \Cref{sec:app_pyramidkv_contract} isolates the remaining PyramidKV layer-budget gap.

H2O-debiased and H2O move the bottleneck upstream into Stage~I. A Stage~II replacement stacked on those selectors would receive a different support distribution, so any gain would mix value repair with a changed support law. The Stage~I correlation between debias lift and access gap is $r{=}0.88$ on the same 96 cells (\Cref{fig:stagei_predictability}, Appendix~\ref{sec:app_conditional_provenance}), in contrast to the Stage~III $r{=}0.14$ (\Cref{fig:stageiii_separability}c). This contrast supports stage separability.

\parhead{Projection and multi-target stress tests.}
\label{sec:ablation}
One Stage~III explanation is that compressed selectors fail because they leave part of the budget unused. Strict block TopK retains $k_b=\lfloor k/p\rfloor$ full blocks under budget $k$ and block size $p$, leaving a lattice residual $\varepsilon_{\mathrm{lat}}(k,p)=k-pk_b$ tokens unspent across cells (\Cref{fig:stageiii_separability}a). Token-fill is the diagnostic control: after block selection, it spends the residual on individual tokens without changing Stage~I or Stage~II rankings. If unused budget were the active Stage~III bottleneck, filling it should restore performance. Token-fill reduces the per-model mean unused-token slack from 12.0, 10.0, and 8.9 tokens to under 0.1 token (\Cref{fig:stageiii_separability}b). Yet the per-cell score change stays weakly correlated with the original slack ($r{=}{+}0.14$, mean $+0.12$\,pp, median $0.00$\,pp), and Stage~II positive and nonpositive cells interleave without separation (\Cref{fig:stageiii_separability}c). Matching the token count does not recover Stage~III performance in this control. Unused budget is not supported as the active standalone error, and the residual points to block projection and support closure.

\begin{figure}[t]
\centering
\begin{minipage}[t]{0.40\linewidth}
\centering
\includegraphics[height=0.88in]{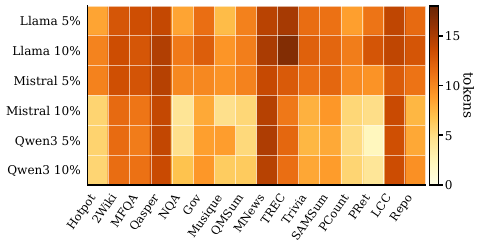}\\
{\footnotesize (a) Strict-block unused-token slack.}
\end{minipage}\hfill
\begin{minipage}[t]{0.24\linewidth}
\centering
\includegraphics[height=0.88in]{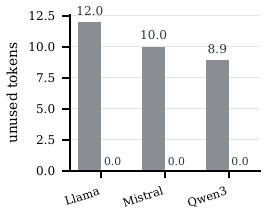}\\
{\footnotesize (b) Per-model slack before and after token-fill.}
\end{minipage}\hfill
\begin{minipage}[t]{0.32\linewidth}
\centering
\includegraphics[height=0.88in]{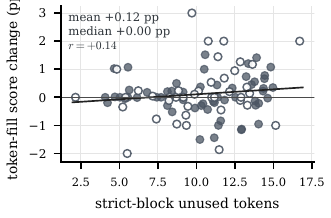}\\
{\footnotesize (c) Score change against strict-block slack.}
\end{minipage}
\caption{Stage~III separability under block projection. (a) Strict-block lattice slack across the 96-cell grid, with rows for model--budget pairs, columns for LongBench~\citep{bai2024longbench} tasks, and color showing unused tokens. (b) Per-model mean slack before and after token-fill. (c) Per-cell score change after token-fill against the original slack, where filled markers are Stage~II positive cells and open markers are nonpositive.}
\label{fig:stageiii_separability}
\end{figure}

The multi-target stress test probes support closure under branched decode queries. The staged diagnostic predicts that a single pooled proxy can lose branch-specific evidence because related fields must survive together. Appendix~\ref{sec:oos_prediction} reports the 32k NeedleBench M-RT rows and a directional RULER~\citep{hsieh2024ruler} 8k check. The former shows pooling losing branch-specific evidence even when individual tokens receive high scores; the latter tests whether logged movement concentrates on multikey retrieval. These rows are stress checks for the projection and support-closure failure mode, not clean Stage~II value-channel evidence. Contracts with allocation drift or broken Stage~I support remain scoped to \Cref{tab:baseline_viability}.

\section{Discussion and Conclusion}
\label{sec:discussion}

A KV compression score is not self-diagnosing. The same task-level change can come from access estimation, value consequence, or projection. Fixing the selector contract lets these stages be tested one at a time. Compression also need not be monotone: eviction can move the value-channel decision boundary, so a compressed cache can sometimes exceed \fullkv{}. Token-fill gives the Stage~III countercheck (\Cref{fig:stageiii_separability}). It spends the strict block lattice slack, yet the score surface barely moves ($r{=}{+}0.14$, mean $+0.12$\,pp). The remaining error sits in block projection and support closure.

Changing the observation window, layer or head allocation, or score-to-cache projection changes the stage under test. Such comparisons still matter for deployment, but they do not identify which component was repaired. The diagnostic order is to fix the contract, perturb one stage, check the predicted sign split, and assign the failure to a stage before optimizing the selector.

Reporting should follow the same logic. A final average delta can pool small wins, boundary rescues, neutral or tied cells, and losses that cancel them. Counts for neutral and tied cells, budget sweeps, and stage-local ablations show whether the movement traces to missing access support, value-channel repair, leverage at a boundary, or projection that drops coupled evidence. Guards and fallback rules should be measured under the same fixed contract.



\bibliographystyle{plainnat}

\newpage
\appendix

\section*{Appendix}
\begin{itemize}[leftmargin=1.5em,itemsep=0.12em,topsep=0.25em,parsep=0pt]
    \item Appendix~\ref{sec:app_notation}: notation and technical preliminaries.
    \item Appendix~\ref{sec:app_main_text_proofs}: proofs for the main-text formal statements.
    \item Appendix~\ref{sec:related}: related work.
    \item Appendix~\ref{sec:app_algorithm}: algorithmic details.
    \item Appendix~\ref{sec:appendix_rate_limited_proxy}: the multi-objective formal backbone.
    \item Appendices~\ref{sec:appendix_exposure_bias_theory}--\ref{sec:app_interpretability}: derivations and diagnostics for exposure bias, head aggregation, and layer-wise access alignment.
    \item Appendix~\ref{sec:oos_prediction}: multi-objective retrieval stress tests.
    \item Appendix~\ref{sec:app_conditional_provenance}: the stage registry, predictor provenance, robustness checks, and transfer tables.
    \item Appendix~\ref{sec:app_additional_controls}: additional confirmatory controls.
    \item Appendices~\ref{sec:app_limitations}--\ref{sec:app_broader_impacts}: limitations and broader impacts.
\end{itemize}

\section{Notation and Technical Preliminaries}
\label{sec:app_notation}

This appendix collects symbols reused across sections. Symbols introduced only for a local derivation are defined at their first use.

\subsection{Notation}
\label{sec:app_notation_list}

Scalars and indices use italic letters, frequently reused sets and index families such as $\mathcal K$ or $\mathcal U$ use calligraphic letters, tensors and vectors use bold symbols (e.g.\ $\mathbf{A}$, $\mathbf{V}$, $\boldsymbol{\mu}$), probability laws are written in plain italic, and operators are written in roman type. Scalar entries and scalar aggregates are not bolded.

\begin{itemize}[leftmargin=1.6em,itemsep=1pt,topsep=2pt,parsep=0pt]
\item $[n]$: the index set $\{1,\ldots,n\}$.
\item $\operatorname{TV}(\mu,\nu)$: total-variation distance between probability laws.
\item $\operatorname{TopK}(s,k)$: the deterministic set of $k$ largest-score indices.
\item $\operatorname{Agg}_{l,h}$: the fixed layer-head aggregation operator used by a selector contract.
\item $T$, $b$, $k$: prompt length, budget ratio, and retained-token budget $k=\lfloor bT\rfloor$.
\item $i$, $u$, $l$, $h$: key position, query or proxy position, layer, and query head.
\item $\mathbf{A}$, $\mathbf{A}_{l,h}$, $\mathbf{A}_{l,h}[u,i]$: prefill-attention tensor, its layer-head slice, and one scalar entry.
\item $\mathbf{V}_{l,\kappa(h),i}$, $\mathbf{o}_{l,h,u}$, $\boldsymbol{\mu}_{l,h,u}(c)$: value vector at one position, attention output vector, and block centroid vector.
\item $s_i$, $\mathcal K$: scalar score and retained cache positions.
\item $u_{\mathrm{acc}}(i)$: decode-side access support.
\item $u_{\mathrm{val}}(i)$: conditional value consequence.
\item $\eta_{\mathrm{proj}}$: projection residual after mapping scores to a retained set.
\item $\hat q_m$, $\hat d_m$, $\hat f_m$: query law, exposure correction, and pooling kernel of estimator $m$.
\item $q^\star$, $d^\star$, $f^\star$: reference counterparts in the Stage~I decomposition.
\item $\delta_{\mathrm{phase}}$, $\delta_{\mathrm{exp}}$, $\xi_{\mathrm{acc}}$: Stage~I query-law, exposure, and aggregation residuals.
\item $\Delta$, $\hat\Delta$, $\xi_{\mathrm{val}}$: true score, estimated score, and value-channel residual in Stage~II.
\item $p^\star$, $p_T$, $\bar p_{\mathcal K}$: first-decode law, last-prefill law, and retained-set-renormalized law.
\item $\phi(x)$, $m_c$, $H_c$, $\Delta_c$: support-coupling index, signed reference margin $m_c=\fullkv{}_c-\mathrm{Host}_c$, its sign $H_c=\operatorname{sign}(m_c)$, and intervention outcome for cell $c$.
\item $\mathcal B$, $c$, $p$, $k_b$: block partition, block, block size, and block budget $k_b=\lfloor k/p\rfloor$.
\item $\mathcal L_\star$, $\beta_l$: selected layer set and layer weight.
\item $\mathcal U$, $\mathcal U^{\mathrm{tail}}$, $\mathcal U^{\mathrm{anchor}}$, $r_u$: proxy bank, its tail and anchor components, and proxy weight.
\item $a_{l,h,u}(c)$, $d_{l,h,u}(c)$: block mass scalar and deletion-cost surrogate.
\item $D_m(c)$, $S(c)$, $\mathcal Z^\star$: group-wise score, pooled block score, and selected-block index set.
\item $\varepsilon_{\mathrm{lat}}$: block-lattice token slack, distinct from $\eta_{\mathrm{proj}}$.
\item $\mu_{\mathrm{dec}}$, $\tilde\eta$, $r$, $\sigma$: labeled decode law, proxy law, channel count, and routing map.
\end{itemize}

Localized symbols not listed above--for example $\gamma_{\varepsilon,l,h,u}$, $\varepsilon_a$, $\varepsilon_\mu$, $R_2$, $M$, $\mathcal U_m$, $w_m$, $\nu_m$, $w_{\max}$, $\lambda_{\mathrm{eff}}$, and $N_i^\star$--are defined at their point of use.

\section{Proofs for the Staged Diagnostic Framework}
\label{sec:app_main_text_proofs}

\subsection{Boundary-Margin Condition}
\label{sec:proof_boundary_margin}

\noindent\textit{Proof of \Cref{lem:boundary_margin}.}
Under the base score, $j$ is retained and $i$ is evicted, so $s_j\ge s_i$ up to the fixed tie break. Under the perturbed score $s'=s+\delta$, token $i$ strictly ranks above token $j$ whenever
\[
s_i+\delta_i > s_j+\delta_j,
\]
which is equivalent to $\delta_i-\delta_j>s_j-s_i$; equality is resolved by the fixed tie break. For a one-for-one boundary swap, the kept set changes from $\mathcal K_s$ to $(\mathcal K_s\setminus\{j\})\cup\{i\}$. If latent utility is locally additive for this swap, the utility difference is
\[
\sum_{r\in(\mathcal K_s\setminus\{j\})\cup\{i\}}u_r-\sum_{r\in\mathcal K_s}u_r
=u_i-u_j .
\]
Away from the equality case, the perturbation helps at that boundary when the margin is crossed and the entering token has larger latent utility than the leaving token. If the margin is not crossed, the pair does not swap; if it is crossed in the opposite utility direction, the swap is harmful.

\subsection{Ordered-Substitution Identity}
\label{sec:proof_ordered_substitution}

\noindent\textit{Proof of \Cref{prop:ordered_substitution}.}
Starting from the two estimators in \Cref{eq:access_support_estimators}, add and subtract the intermediate estimators that replace one component at a time:
\begin{align*}
\hat u_{\mathrm{acc}}(i;m)-u_{\mathrm{acc}}(i)
&=
\sum_{u\in\mathcal Q_m} \hat q_m(u)\hat d_m(i,u)\hat f_m(u,i)
-
\sum_{u\in\mathcal Q_m} q^\star(u)d^\star(i,u)f^\star(u,i)\\
&=
\sum_{u\in\mathcal Q_m} \bigl(\hat q_m(u)-q^\star(u)\bigr)d^\star(i,u)f^\star(u,i)\\
&\quad+
\sum_{u\in\mathcal Q_m} \hat q_m(u)\bigl(\hat d_m(i,u)-d^\star(i,u)\bigr)f^\star(u,i)\\
&\quad+
\sum_{u\in\mathcal Q_m} \hat q_m(u)\hat d_m(i,u)\bigl(\hat f_m(u,i)-f^\star(u,i)\bigr).
\end{align*}
The three displayed terms are exactly $\delta_{\mathrm{phase}}(i;m)$, $\delta_{\mathrm{exp}}(i;m)$, and $\xi_{\mathrm{acc}}(i;m)$ in \Cref{eq:decomp}. The identity is therefore a telescoping decomposition and does not use Taylor expansion or an independence assumption.

\subsection{Tail-K TV Bound}
\label{sec:proof_tailk_tv}

\noindent\textit{Proof of \Cref{prop:last_prefill_anchor}.}
Both inequalities are the triangle inequality for total variation distance. Applying it with the intermediate law $p_T$ gives
\[
\operatorname{TV}(p^\star,\bar p_{\mathrm{tail},K})
\le
\operatorname{TV}(p^\star,p_T)
+
\operatorname{TV}(p_T,\bar p_{\mathrm{tail},K}),
\]
which proves \Cref{eq:tailk_tv_bound}. Replacing $\bar p_{\mathrm{tail},K}$ by $\bar p_{\mathrm{all}}$ gives \Cref{eq:allprefill_tv_bound}.

\section{Related Work}
\label{sec:related}

We group prior work by the score proxy used to select, store, or read cache states.

\parhead{Token-level eviction.}
H2O~\citep{zhang2023ho} uses cumulative attention; StreamingLLM~\citep{xiao2024efficient} retains sinks plus a recency window; SnapKV~\citep{li2024snapkv} uses an observation window of the last $W$ queries per layer; PyramidKV~\citep{cai2024pyramidkv} uses pyramidal layer-budget allocation; Ada-KV~\citep{feng2026adakv} adapts per-head budgets; and R-KV~\citep{cai2025r} adds redundancy-aware selection for reasoning models.
These methods still depend on attention-derived scores, so they can fail when the observed prefill signal differs from the decode-time utility. Our analysis explains why low-budget results separate and why some methods become similar once the largest mismatch is repaired.
A recent survey~\citep{su2026attention} reviews attention-sink utilization, interpretation, and mitigation, including the practice of retaining high-cumulative-attention anchors during compression. The same survey notes that a unified theoretical framework remains open.
Online reactive policies such as TOVA~\citep{oren2024transformers} are out of contract for the confirmatory Stage~II test because they change both the observation law and the generation-time cache dynamics.

\parhead{Bias-aware and decode-aligned eviction.}
AhaKV~\citep{gu2025ahakv} formalizes the position-dependent bias in cumulative attention; Attention Debiasing~\citep{zhao2025attentiondebiasing} applies the same $\div(T{-}i)$ correction for VLMs.
We place exposure bias inside the Stage~I access-support estimator alongside query-law mismatch (Phase Dilution), which explains failure patterns a count fix alone cannot, including the reverse-depth behavior after isolated count repair.
DapQ~\citep{tian2026matters} studies the same decode-alignment axis by reporting higher scores for position-aware pseudo-queries than for cumulative scoring; Expected Attention~\citep{devoto2025expected} estimates attention under a future-query distribution; ForesightKV~\citep{dong2026foresightkv} and LookaheadKV~\citep{ahn2026lookaheadkv} use learned or draft-model surrogates for future access.
The Stage~II clean grid uses the zero-extra-forward-pass operating point on this axis; a matched-compute control confirms that pseudo-query scoring yields only a small grand-mean gain at nontrivial additional prefill cost.
ReST-KV~\citep{an2026restkv} takes a different approach, replacing raw attention proxies with an output-aware surrogate built from layer-wise reconstruction. Its objective is reconstruction; our diagnosis addresses score misranking at the eviction boundary.
Early attention analysis showed that heads carry heterogeneous linguistic, positional, and syntactic patterns rather than a single uniform importance signal~\citep{clark2019does}.
In the VLA domain, TIES~\citep{liu2026beyond} observes that high-attention tokens are not always important; reasoning-token analysis~\citep{singh2026llms} and direction-aware RLVR update analysis~\citep{huangbeyond} reach related conclusions about magnitude-only indicators.
The attention tensor contains decode-utility information, but the way it is aggregated determines whether that information survives into the final score; head-functional differentiation, identified by RazorAttention~\citep{tang2025razorattention} as a small group of retrieval-relevant heads, is the head-side counterpart of the same observation.
CriticalKV~\citep{feng2025identify}, VATP~\citep{guo2024attention}, DefensiveKV~\citep{feng2026defensivekv}, and related value-aware eviction variants modify scoring or aggregation inside a fixed selection contract. The staged diagnostic treats these changes as value-consequence interventions and asks when the surrounding contract leaves a recoverable signed reference margin. A value-aware score is a clean Stage~II intervention only when the access estimator and projection rule are held fixed.

\parhead{Soft query proxies and future-aware scoring.}
A growing line of work replaces hard tail-only observation windows with softer or more forward-looking query representations.
LAQ~\citep{wang2025lookahead} generates low-cost pseudo lookahead queries and uses them as the observation window for importance estimation.
LookaheadKV~\citep{ahn2026lookaheadkv} adds parameter-efficient modules that predict future-response importance without explicit draft generation.
KVzip~\citep{kim2025kvzip} takes a query-agnostic route, compressing the cache so that the original context is reconstructible under arbitrary downstream queries.
These works address the gap between prefill-derived importance scores and true decode queries through lookahead or query-agnostic corrections.
Our focus is when that gap changes the outcome. Under a fixed observation-window contract, the signed reference margin and support coupling predict whether a value-channel correction helps or harms.
On the token side, GraphKV~\citep{li-etal-2025-graphkv} adds redundancy-aware reranking through similarity-graph diffusion, addressing the coverage failure seen in multi-objective retrieval.
The fixed-size prefill-visible proxy bank used in the Stage~II clean test sits at the zero-extra-forward-pass end of this spectrum. Higher-cost proxy families target the same decode-mismatch axis with broader query support, especially for long decode trajectories and multi-objective prompts. \Cref{sec:app_factorized_sketch} states the grouped proxy objective used for the multi-target appendix under the same decode-proximal principle.

\parhead{Learned compressed-summary attention.}
Recent natively trained sparse-attention architectures replace raw token-level history with learned compressed summaries and hardware-aligned block reads. NSA organizes history into compressed coarse-grained tokens, selected fine-grained blocks, and a sliding window, while DeepSeek-V4 combines Compressed Sparse Attention and Heavily Compressed Attention to support million-token contexts through learned compressed KV entries~\citep{yuan2025native,deepseekai2026deepseekv4}. State-space sequence models such as Mamba~\citep{gu2024mamba} avoid dense-attention KV growth through a different recurrent-state substrate. These methods change the model architecture and the cache state itself. Our setting is different: we study non-reconstructing eviction for pretrained dense-attention models after training. Blocks are selection and projection units; the retained cache contains original KV states, and block centroids are used only to score deletion consequence.

\parhead{Allocation, compression, and orthogonal directions.}
FastGen~\citep{ge2023model} uses head-specific retention; CAKE~\citep{qin2025cake}, ARKV~\citep{lei2026arkv}, and adaptive layer selection~\citep{taniguchi2026adaptive} allocate cache across layers, pruning stages, or precisions given an importance score.
The diagnostic framework operates at the prior question of which score to trust.
Storage-side and token-adaptive compression~\citep{hooper2024kvquant,liu2024kivi,nawrot2024dynamic,lu2026one}, architecture-level sparse attention~\citep{bai2026indexcache}, and residual-attention architectures~\citep{team2026attention} act on different axes; applying the score diagnostic to them requires a separate fixed-contract interface rather than a direct head-to-head eviction comparison.
FleetOpt~\citep{chen2026fleetopt} treats compression as part of fleet provisioning and routing, with selector scoring as one component.
Recent failure-oriented evaluations and system studies~\citep{chen2025pitfalls,sood2026more,bocharnikov2026offloading} document when compression or offloading breaks; our contribution is to assign those failures to access estimation, value ranking, or projection under a fixed contract.

\parhead{Boundary for reconstructing and non-eviction compression.}
We study \emph{non-reconstructing} eviction, where methods drop tokens without compensating their value. Numerical-recovery quantization~\citep{liu2024kivi,hooper2024kvquant} keeps every token but lowers precision; low-rank latent reconstruction, as in Palu~\citep{chang2025palu}, regenerates key-value states at decode time from a compressed cache; importance-aware mixed-precision quantization, as in MiKV~\citep{yang2024no}, keeps cache entries while assigning lower bit-widths to less important positions; compensation-token methods, as in RazorAttention~\citep{tang2025razorattention}, inject summary tokens for dropped content; and context-reconstruction-guided compression, as in KVzip~\citep{kim2025kvzip}, optimizes for reproducing the original context. These methods operate on different axes and fall outside the head-to-head eviction comparison; the staged diagnostic applies to the eviction step itself, separate from the storage or reconstruction layer.

\section{Algorithm Details}
\label{sec:app_algorithm}

This appendix specifies the default block-level selector and the controlled extensions used in appendix ablations.
It gives the reference pseudocode, the support-only diagnostic channel, the extension hierarchy, and the identity check that recovers the canonical recipe.
Each optional component is separated from the default method used in the main experiments.

\Cref{alg:pap} gives the reference implementation of the canonical block-level default.
Appendix ablations modify this recipe in three disabled-by-default ways: grouped proxy banks for explicit multi-target prompts, soft-robust aggregation across groups, and a small reserve for heavy-tail outlier blocks.

\begin{algorithm}[H]
\caption{Reference implementation of the staged scoring pipeline}
\label{alg:pap}
\small
\begin{algorithmic}[1]
\STATE Partition prompt into blocks $\mathcal{B}=\{c_1,\dots,c_{N_b}\}$ of size $p$; $k \gets \lfloor bT \rfloor$; $k_b \gets \lfloor k/p \rfloor$; $S(c)\gets 0$ for all $c$
\STATE \textbf{Module~I: recover decode-side access support $u_{\mathrm{acc}}$}
\STATE Select late or plateau layers $\mathcal{L}_\star$ with normalized weights $\{\beta_l\}$ \hfill $\triangleright$ \textit{reduces $\xi_{\mathrm{acc}}$}
\STATE $\mathcal{U}^{\mathrm{tail}} \gets \{T{-}w{+}1,\dots,T\}$; \ $\mathcal{U}^{\mathrm{anchor}} \gets \mathcal{C}_{\mathrm{anchor}}$; \ $\mathcal{U} \gets \mathcal{U}^{\mathrm{tail}} \cup \mathcal{U}^{\mathrm{anchor}}$ \hfill $\triangleright$ \textit{fixes $\delta_{\mathrm{phase}}$}
\STATE $r_u \propto \exp(-(T{-}u)/\tau_q)$ on tail, $\propto 1$ on anchor; normalize \hfill $\triangleright$ \textit{cancels $\delta_{\mathrm{exp}}$}
\STATE \textbf{Module~II: reweight by conditional output consequence $u_{\mathrm{val}}$}
\FOR{each $l \in \mathcal{L}_\star$, $u \in \mathcal{U}$, and query head $h$}
\STATE $\mathbf{o}_{l,h,u} \gets \sum_{i=1}^{T} \mathbf{A}_{l,h}[u,i]\,\mathbf{V}_{l,\kappa(h),i}$
\FOR{each block $c \in \mathcal{B}$}
\STATE $a \gets \sum_{i\in c} \mathbf{A}_{l,h}[u,i]$, \ $\boldsymbol{\mu} \gets \textstyle\sum_{i\in c} \mathbf{A}_{l,h}[u,i]\,\mathbf{V}_{l,\kappa(h),i}/(a{+}\varepsilon)$
\STATE $S(c) \mathrel{+}= r_u\,\beta_l\,\bigl(a/(1{-}a{+}\varepsilon)\bigr)^2\,\lVert\boldsymbol{\mu} - \mathbf{o}_{l,h,u}\rVert_2^2$
\ENDFOR
\ENDFOR
\STATE \textbf{Module~III: block-constant posterior approximation} \hfill $\triangleright$ \textit{implemented by strict block TopK}
\STATE \textbf{Output:} $\mathcal{K} \gets \bigcup\bigl\{c : S(c) \in \mathrm{TopK}_{k_b}(\{S(c')\}_{c'\in\mathcal{B}})\bigr\}$
\end{algorithmic}
\end{algorithm}

\textbf{Support-only diagnostic score.}\nobreak\hspace{0.25em}Aggregating attention mass alone gives the support-only diagnostic
\[
\operatorname{Acc}(c)=
\sum_{u\in \mathcal{U}} r_u
\sum_{l\in\mathcal{L}_\star}\beta_l
\sum_h a_{l,h,u}(c),
\]
which answers only whether block $c$ lies in the predicted decode-side support region. It is separate from Module~II's output-distortion surrogate; we expose it in the runner as an optional channel for Module~I-only ablations.

The extension hierarchy is separate from the default selector.
Grouped proxies are introduced only when the prompt explicitly contains multiple competing targets.
Soft-robust aggregation replaces the pooled weighted sum with a log-sum-exp score over groups, and the reserve branch keeps $r$ blocks with the largest groupwise distortion.
\textbf{Identity check.}\nobreak\hspace{0.25em}When $M{=}1$, soft-robust aggregation is disabled, and $r{=}0$, these choices reduce exactly to the pooled block-aligned default in Algorithm~\ref{alg:pap}.
In that case, $D_1(c)$ is the only group score, $S(c)=D_1(c)$, $\mathcal{Z}_{\mathrm{res}}=\emptyset$, and the selected block set is the standard top-$k_b$ set.
Soft-robust aggregation and the reserve branch are appendix extensions, not part of the main default.
The multi-target stress case is reported in \Cref{sec:oos_prediction}.

\subsection{\texorpdfstring{Grouped Proxy Objective under the $r$-Channel Bound}{Grouped Proxy Objective under the r-Channel Bound}}
\label{sec:app_factorized_sketch}
\label{app:pap_factorized}

Under the mode-separation assumption of \Cref{thm:r_channel_tv}, every $r{=}1$ proxy, including the pooled default, incurs a budget-independent lower bound $1-\varepsilon-w_{\max}$ on labeled multi-target decoding, with the equal-weight case reducing to $1-\varepsilon-1/n$.
The same prefill-visible proxy bank admits a grouped formulation by assigning proxies to target slots.
Let $\mathcal{U}=\mathcal{U}^{\mathrm{tail}}\cup \mathcal{U}^{\mathrm{anchor}}=\bigsqcup_{m=1}^{M} \mathcal{U}_m$ be a prefill-visible proxy bank with weights defined as in \Cref{eq:pap_principle}, and let
\[
D_m(c) \;:=\; \sum_{u \in \mathcal{U}_m} r_u \sum_{l\in\mathcal{L}_\star}\beta_l\sum_h d_{l,h,u}(c),
\]
where $d_{l,h,u}(c)$ is the block-level output-distortion surrogate used by the canonical block-level default. The corresponding soft-robust pooled score is
\[
S_{\mathrm{rob}}(c)=
\tau_g \log \sum_{m=1}^{M}\exp\!\left(\frac{D_m(c)}{\tau_g}\right),
\]
optionally combined with a tiny outlier reserve.
With $M{=}1$, this reduces to the canonical pooled setting; with $M{>}1$, it gives the grouped multi-target objective used to interpret the factorized retrieval comparison under the same prefill-visible, output-aware principle.
The grouped objective changes the channel structure only when the prompt itself exposes multiple target slots.
It is not used to tune the main LongBench results; it is the algorithmic counterpart of the $r$-channel theory in \Cref{sec:appendix_rate_limited_proxy}.

\subsection{Scope and Limitations}
\label{sec:app_algorithm_scope}

The reference selector and optional extensions below define the controlled ablations.
The default method is the pooled block-aligned recipe; grouped proxies, soft-robust aggregation, and the reserve branch are activated only for the appendix stress tests that require them.
The pseudocode assumes access to attention and value tensors during prefill and does not cover serving stacks that hide or quantize those tensors before selection.

\section{Formal Backbone for Multi-Objective Phase Dilution}
\label{sec:appendix_rate_limited_proxy}

This appendix isolates the multi-objective part of Phase Dilution as a rate-limited proxy problem.
The main object is the labeled decode law, not the unlabeled marginal.
The lower bound is therefore about target-slot collapse, not about finite-sample failure of a particular estimator.
If one drops the slot label and keeps only the unlabeled mixture $\sum_m w_m \nu_m$, a pooled proxy can match that marginal exactly, so no non-trivial TV floor independent of the target mixture survives on the unlabeled object.
The appendix defines the labeled law and the $r$-channel proxy class, then proves the lower bound and records the pooled and slot-factorized consequences used by the retrieval discussion.
All formal statements below use the fixed selector contract in \Cref{assump:fixed_contract}; the variable is the proxy channel count.

\parhead{Candidate factorized objective.}
Output-aware pruning~\citep{an2026restkv,goel2025caote} suggests a labeled multi-target variant. It keeps Module~I's support recovery and Module~II's renormalized block distortion fixed, then replaces the pooled block score by a labeled objective. The canonical pooled score is the $M{=}1$ operating point; factorized or soft-robust retention applies when $M{>}1$ and the decode law fragments across competing targets.

Let $J \in [n] := \{1,\ldots,n\}$ denote the target slot or query mode, with $\Pr(J=m) = w_m$ and $\sum_{m=1}^n w_m = 1$.
Let $\nu_m \in \mathcal{P}(\mathcal{Q})$ denote the decode-query law conditional on slot $m$.
The labeled decode law keeps the slot identity:
\begin{equation}
\mu_{\mathrm{dec}} \;:=\; \sum_{m=1}^n w_m \, \delta_m \otimes \nu_m \;\in\; \mathcal{P}([n] \times \mathcal{Q}).
\end{equation}

For $r \le n$, an $r$-channel proxy is any law of the form
\begin{equation}
\tilde{\eta} \;:=\; \sum_{m=1}^n w_m \, \delta_m \otimes \eta_{\sigma(m)},
\label{eq:r_channel_proxy}
\end{equation}
where $\sigma : [n] \to [r]$ is a routing map and $\eta_1,\ldots,\eta_r \in \mathcal{P}(\mathcal{Q})$ are shared proxy channels.
\eqref{eq:r_channel_proxy} is deliberately generous.
It already gives the proxy access to the slot identity through $\sigma(m)$.
Pooled retention is the special case $r=1$.
Slot-factorized retention is the special case $r=n$ with $\sigma(m)=m$.

The labeled decode law is $\varepsilon$-separated when there exist pairwise disjoint measurable basins $\mathcal R_1,\ldots,\mathcal R_n$ in $\mathcal{Q}$ such that
\begin{equation}
\nu_m(\mathcal R_m) \;\ge\; 1-\varepsilon, \qquad m=1,\ldots,n.
\label{eq:basin_sep}
\end{equation}
Under QK-normalized query embeddings, one concrete choice is to work on the unit sphere with geodesic distance and let $c_m$ be a prototype direction for mode $m$.
If the minimum pairwise angle is $\Delta_{\min} := \min_{m \neq m'} d(c_m, c_{m'})$, then any radius $\rho < \Delta_{\min}/2$ yields pairwise disjoint metric balls $\mathcal R_m := \{q : d(q, c_m) \le \rho\}$.
The separation assumption then reduces to requiring that each $\nu_m$ places at least $1-\varepsilon$ of its mass in its own ball.

\begin{theorem}[$r$-channel TV lower bound]
\label{thm:r_channel_tv}
Assume \eqref{eq:basin_sep} and sort the mode weights as $w_{(1)} \ge \cdots \ge w_{(n)}$.
Then every $r$-channel proxy $\tilde{\eta}$ satisfies
\begin{equation}
\operatorname{TV}(\mu_{\mathrm{dec}}, \tilde{\eta})
\;\ge\;
1 - \varepsilon - \sum_{j=1}^r w_{(j)}.
\label{eq:r_channel_tv}
\end{equation}
\end{theorem}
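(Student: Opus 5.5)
The plan is to exploit the fact that both laws share the same slot marginal $w$. Total variation on the product space $[n]\times\mathcal Q$ then decomposes slotwise:
\[
\operatorname{TV}(\mu_{\mathrm{dec}},\tilde\eta)=\sum_{m=1}^n w_m\,\operatorname{TV}\bigl(\nu_m,\eta_{\sigma(m)}\bigr).
\]
This follows because for any measurable $E\subseteq[n]\times\mathcal Q$ with slices $E_m$, the signed difference is $\sum_m w_m(\nu_m(E_m)-\eta_{\sigma(m)}(E_m))$. The supremum over $E$ is therefore attained by choosing each slice independently, which gives the weighted sum of slot TVs. Equivalently, one can integrate $\tfrac12|\cdot|$ against a common dominating measure.

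The next step is a per-slot lower bound from the basins. For each $m$, test the event $\mathcal R_m$:
\[
\operatorname{TV}(\nu_m,\eta_{\sigma(m)})\ge \nu_m(\mathcal R_m)-\eta_{\sigma(m)}(\mathcal R_m)\ge 1-\varepsilon-\eta_{\sigma(m)}(\mathcal R_m).
\]
Summing with weights gives
\[
\operatorname{TV}(\mu_{\mathrm{dec}},\tilde\eta)\ge 1-\varepsilon-\sum_{m=1}^n w_m\,\eta_{\sigma(m)}(\mathcal R_m).
\]
The TV bound itself needs no positivity truncation. If the right-hand side of the target inequality is negative, the statement is trivial anyway.

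The main step is to show that
\[
\sum_m w_m\,\eta_{\sigma(m)}(\mathcal R_m)\le\sum_{j=1}^r w_{(j)}.
\]
Group the slots by channel $c\in[r]$:
\[
\sum_{c=1}^r\sum_{m\in\sigma^{-1}(c)} w_m\,\eta_c(\mathcal R_m).
\]
The basins are pairwise disjoint, so $\sum_{m\in\sigma^{-1}(c)}\eta_c(\mathcal R_m)\le 1$. The inner sum is therefore a convex-type combination of the weights in $\sigma^{-1}(c)$ with total coefficient at most one, and it is bounded by $\max_{m\in\sigma^{-1}(c)} w_m$, or by $0$ if the fiber is empty. The fibers partition $[n]$, so these at most $r$ maxima are attained at distinct slots. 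Their sum is at most the sum of the $r$ largest weights, $\sum_{j\le r}w_{(j)}$.

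Combining the three steps yields \eqref{eq:r_channel_tv}. The specializations follow directly: $r=1$ gives $1-\varepsilon-w_{\max}$, and equal weights give $1-\varepsilon-1/n$.

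I expect the main obstacle to be stating the slotwise TV decomposition rigorously with the paper's TV convention, i.e. the supremum over events. The cleanest route is probably to use only the single test event $E=\bigcup_m\{m\}\times\mathcal R_m$. This gives
\[
\operatorname{TV}\ge\mu_{\mathrm{dec}}(E)-\tilde\eta(E),
\]
which is exactly the summed inequality above and avoids needing the full decomposition.
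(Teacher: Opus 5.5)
Your proposal is correct and matches the paper's proof: the paper uses exactly your single test event $A=\{(m,q): q\in\mathcal R_m\}$, groups slots into the fibers $\sigma^{-1}(j)$, and uses disjointness of the basins to bound each channel's contribution by $\max_{m\in\sigma^{-1}(j)} w_m$ before comparing with the $r$ largest weights. Your slotwise TV identity is valid but an unnecessary detour, and your remark that the fiber maxima sit at at most $r$ distinct slots justifies the last step a bit more cleanly than the paper's wording does.
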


\begin{proof}[Proof of \Cref{thm:r_channel_tv}]
Let
\begin{equation}
A \;:=\; \{(m,q) \in [n] \times \mathcal{Q} : q \in \mathcal R_m\}.
\end{equation}
By \eqref{eq:basin_sep},
\begin{equation}
\mu_{\mathrm{dec}}(A)
=
\sum_{m=1}^n w_m \nu_m(\mathcal R_m)
\ge
1-\varepsilon.
\end{equation}
For $\tilde{\eta}$ in \eqref{eq:r_channel_proxy},
\begin{equation}
\tilde{\eta}(A)
=
\sum_{m=1}^n w_m \eta_{\sigma(m)}(\mathcal R_m).
\end{equation}
Write $G_j := \sigma^{-1}(j)$.
Since the basins are pairwise disjoint,
\begin{equation}
\sum_{m \in G_j} \eta_j(\mathcal R_m) \le 1.
\end{equation}
Therefore
\begin{equation}
\sum_{m \in G_j} w_m \eta_j(\mathcal R_m)
\le
\Bigl(\max_{m \in G_j} w_m\Bigr) \sum_{m \in G_j} \eta_j(\mathcal R_m)
\le
\max_{m \in G_j} w_m,
\end{equation}
and summing over channels gives
\begin{equation}
\tilde{\eta}(A)
\le
\sum_{j=1}^r \max_{m \in G_j} w_m.
\end{equation}
The largest possible right-hand side is attained by assigning the $r$ largest weights to distinct channels, so
\begin{equation}
\tilde{\eta}(A)
\le
\sum_{j=1}^r w_{(j)}.
\end{equation}
Finally,
\begin{equation}
\operatorname{TV}(\mu_{\mathrm{dec}}, \tilde{\eta})
\ge
\mu_{\mathrm{dec}}(A) - \tilde{\eta}(A)
\ge
1 - \varepsilon - \sum_{j=1}^r w_{(j)}.
\end{equation}
\end{proof}

Two consequences are used in the multi-target retrieval discussion. For pooled retention, $r=1$, so every pooled proxy satisfies
\begin{equation}
\operatorname{TV}(\mu_{\mathrm{dec}}, \tilde{\eta}_{\mathrm{pool}})
\;\ge\;
1 - \varepsilon - w_{\max}.
\label{eq:pooled_floor}
\end{equation}
In the equal-weight case $w_m = 1/n$, this becomes
\begin{equation}
\operatorname{TV}(\mu_{\mathrm{dec}}, \tilde{\eta}_{\mathrm{pool}})
\;\ge\;
1 - \varepsilon - \frac{1}{n}.
\end{equation}
This is \Cref{thm:r_channel_tv} with $r=1$, where $\sum_{j=1}^r w_{(j)}=w_{\max}$; the equal-weight display uses $w_{\max}=1/n$.
The bound already allows the pooled proxy to know the slot identity through the routing map $\sigma$.
Even under that favorable relaxation, a single shared channel cannot avoid label collapse.
The lower bound is an irreducible approximation floor of the one-channel class.

In contrast, let
\begin{equation}
\tilde{\eta}_{\mathrm{fact},K}
\;:=\;
\sum_{m=1}^n w_m \, \delta_m \otimes \eta_{m,K}
\end{equation}
be a slot-factorized proxy with one channel per slot.
If $\operatorname{TV}(\eta_{m,K}, \nu_m) \to 0$ for every $m$, then disjointness of the slot labels gives
\begin{equation}
\operatorname{TV}(\mu_{\mathrm{dec}}, \tilde{\eta}_{\mathrm{fact},K})
=
\sum_{m=1}^n w_m \, \operatorname{TV}(\nu_m, \eta_{m,K}),
\label{eq:factorized_consistency}
\end{equation}
so $\operatorname{TV}(\mu_{\mathrm{dec}}, \tilde{\eta}_{\mathrm{fact},K}) \to 0$ by dominated convergence.

For an end-task functional $\Phi : \mathcal{P}([n] \times \mathcal{Q}) \to \mathbb{R}$ satisfying
\begin{equation}
|\Phi(\mu) - \Phi(\nu)| \le L \, \operatorname{TV}(\mu,\nu)
\label{eq:lipschitz_task}
\end{equation}
for all $\mu,\nu$, an empirical $r$-channel proxy $\tilde{\eta}_{K,r}$ and its population counterpart $\tilde{\eta}^{\star}_{r}$ satisfy the triangle-inequality bridge
\begin{equation}
|\Phi(\mu_{\mathrm{dec}}) - \Phi(\tilde{\eta}_{K,r})|
\le
L \, \operatorname{TV}(\mu_{\mathrm{dec}}, \tilde{\eta}_{K,r})
\le
L \Bigl[
\operatorname{TV}(\mu_{\mathrm{dec}}, \tilde{\eta}^{\star}_{r})
+ \operatorname{TV}(\tilde{\eta}^{\star}_{r}, \tilde{\eta}_{K,r})
\Bigr].
\label{eq:lipschitz_bridge}
\end{equation}
If
\begin{equation}
\operatorname{TV}(\tilde{\eta}^{\star}_{r}, \tilde{\eta}_{K,r})
\le
\Delta_{\mathrm{cont}}(K) + \Delta_{\mathrm{est}}(K),
\end{equation}
then this becomes
\begin{equation}
|\Phi(\mu_{\mathrm{dec}}) - \Phi(\tilde{\eta}_{K,r})|
\le
L \Bigl[
\operatorname{TV}(\mu_{\mathrm{dec}}, \tilde{\eta}^{\star}_{r})
+ \Delta_{\mathrm{cont}}(K)
+ \Delta_{\mathrm{est}}(K)
\Bigr].
\end{equation}
Under standard LLN or mixing assumptions on the tail sample, $\Delta_{\mathrm{est}}(K) = O_{\mathbb{P}}(K^{-1/2})$.

\Cref{thm:r_channel_tv} gives a formal model for one branching bottleneck seen empirically in the 32k multi-objective retrieval table.
Pooled selection is a one-channel proxy class and retains the floor in \eqref{eq:pooled_floor} when the labeled-law assumptions hold.
Factorized selection removes that label-collapse term by allocating one channel per target slot, as shown in \eqref{eq:factorized_consistency}.
The theorem does not by itself set score ceilings or identify the optimal proxy bank.
Increasing $K$ can only reduce contamination and estimation error around that floor; it cannot remove the floor itself.
If each mode-conditioned channel has a single-mode saturation budget $K_0 = O(1)$, then the useful total budget of a factorized proxy scales as $nK_0$, whereas the pooled proxy saturates once the non-floor terms become small.

\subsection{\texorpdfstring{Tail-$K$ Consistency via Ces\`aro Sufficient Condition}{Tail-K Consistency via Cesaro Sufficient Condition}}
\label{sec:appendix_cesaro}

Let $\rho_j := \Pr(q_{T-j} \sim \mu_P)$ denote the decode-proximity rate of the $j$-th tail query, and let $\bar c_K := \frac{1}{K}\sum_{j=0}^{K-1}(1-\rho_j)$ denote the tail contamination. Under a characteristic kernel and a standard LLN/mixing condition on the tail sample, vanishing tail contamination as $K$ grows is sufficient for the tail-$K$ empirical measure $\widehat\mu_{\mathrm{tail},K}$ to converge in MMD to $\mu_P$. The converse is not used here. Persistent contamination can be MMD-invisible when its RKHS mean embedding matches or cancels against that of $\mu_P$. Under the stronger assumption that $\rho_j$ is monotone in $j$ on the tail, the Ces\`aro condition is equivalent to tail proximity rising to one as $j$ grows. The weaker Ces\`aro form is what actually underlies the diagnostic prediction in \Cref{sec:phase_dilution}, and the monotone version is a sufficient special case but is not required.

\subsection{QK-norm as Temperature Stabilization under Mismatch}
\label{sec:appendix_qknorm_tradeoff}

Let $\beta$ denote the effective post-normalization attention sharpness, and let $\beta_q$ denote the pre-normalization query-wise sharpness induced by norm fluctuations. The key role of QK-norm is to collapse the extra heteroscedasticity coming from $\beta_q$; it does not imply a monotone decrease of unconditional logit variance across all query laws. For a tail-$K$ proxy with query-distribution mismatch $D_K := W_1(\mu_{\mathrm{tail},K}, \mu_P)$, the softmax Jacobian has norm bounded by $\beta L_0$ for some $L_0 > 0$, giving
\[
\bigl\|\,S_\beta(\mu_{\mathrm{tail},K}) - S_\beta(\mu_P)\bigr\|_1 \;\le\; \beta L_0\, D_K,
\qquad
\mathrm{MSE}_K(\beta) \;\lesssim\; \beta^2 L_0^2 D_K^2 \;+\; \tfrac{V_{\mathrm{within}}(\beta) + V_{\mathrm{temp}}}{K}.
\]
Here $V_{\mathrm{temp}}$ is the query-temperature component driven by variability in $\beta_q$, and this term is precisely what QK-norm suppresses. The residual term $V_{\mathrm{within}}(\beta)$ captures within-query sampling variance and has no monotone law in $\beta$ across all query laws. The practical implication is conditional. QK-norm helps when $D_K$ is low because it removes query-wise temperature noise. Under severe mismatch, the sharper transfer can also magnify the bias term $\beta L_0 D_K$. This gives one possible mechanism for the observed Qwen3 pattern. Full-retention single-key performance can coexist with weakness on the two-key variant when the one-channel floor in \eqref{eq:pooled_floor} starts to bind.

\subsection{RoPE Locality Surrogate}
\label{sec:appendix_rope_yarn}

RoPE has a finite frequency-mixture structure. Writing
\[
K(\Delta) \;=\; \sum_{j=1}^{d_h/2} a_j \cos(\omega_j \Delta),
\qquad \omega_j = \theta^{-2(j-1)/d_h},
\]
with fixed nonnegative weights and $K(0)>0$, Taylor expansion at $\Delta=0$ gives
\[
\frac{K(\Delta)}{K(0)} \;=\; 1-\tfrac{1}{2}\lambda_{\mathrm{eff}}^2\,\Delta^2+O(\Delta^4),
\qquad
\lambda_{\mathrm{eff}}^2 := \frac{\sum_j a_j \omega_j^2}{\sum_j a_j},
\qquad
\partial \lambda_{\mathrm{eff}}^2 / \partial \theta \le 0,
\]
with strict inequality when positive weight falls on at least one $\theta$-dependent frequency. Matching this local quadratic to a Gaussian gives a small-$\Delta$ approximation, not a global envelope for finite RoPE frequency sums. Larger $\theta$ weakens this local-curvature proxy under that same weight condition. The exponential kernel in \Cref{sec:appendix_exposure_bias_theory} is used as a tractable one-parameter surrogate; the conclusions about tail-high residual curvature require a chosen non-increasing surrogate, not a global RoPE envelope.

\subsection{Scope and Limitations}
\label{sec:appendix_rate_limited_proxy_scope}

This appendix analyzes a labeled multi-target proxy class under explicit mode separation.
The bound applies to target-slot collapse in the labeled law and does not claim a floor for all unlabeled marginals.
The result explains why pooled multi-target retrieval can saturate early and why factorized retention is the relevant comparison when target slots are available in the prompt.

\section{Exposure Bias Theoretical Derivations}
\label{sec:appendix_exposure_bias_theory}

This appendix gives the derivations behind \Cref{sec:exposure_bias}: the causal-softmax decomposition, special cases, a tractable kernel surrogate, the MSE-optimal observation count, the correlated-noise extension, and the reverse-depth interpretation. Symbols follow \Cref{sec:exposure_bias}, namely $C_i$, $N_i$, $\mu(\Delta)$, $S_i$, and $\widehat{u}_i$.
Each subsection isolates one assumption change and states the implication for count debiasing.
The derivations identify which residuals the experiments should measure separately.

\Cref{fig:exposure_tail_residual} separates the residual mean from the variance after count debiasing. The mean correction bends mass toward the prompt tail, while the variance penalty appears mainly in the same tail region.

\begin{figure}[H]
	\centering
	\subfloat[Post-debias mean.]{
    \label{fig:exposure_tail_residual_a}%
	\begin{minipage}[h]{0.43\textwidth}
	\centering
    \includegraphics[width=\textwidth]{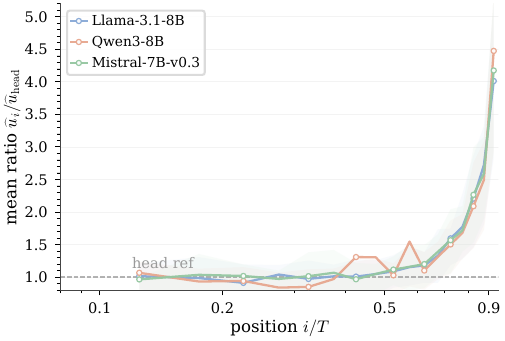}
    \end{minipage}
    }
    \hfill
    \subfloat[Post-debias variance.]{
    \label{fig:exposure_tail_residual_b}%
	\begin{minipage}[h]{0.43\textwidth}
	\centering
    \includegraphics[width=\textwidth]{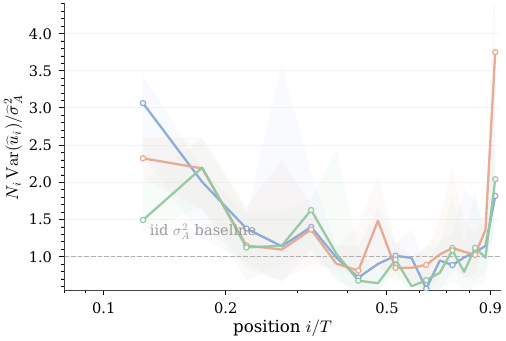}
    \end{minipage}
    }
    \caption{Exposure-debiasing residuals.}
\label{fig:exposure_tail_residual}
\end{figure}

\subsection{General Decomposition}

Absorbing layer and head sums into $A_{t,i} := \sum_{L,h} A_{L,h}(q_t, k_i)$ and writing $\mu(\Delta) := \expect[A_{t,t-\Delta}]$ and $S_i := \sum_{t=i}^T \bigl(\expect[A_{t,i} \mid \text{content}] - \mu(t-i)\bigr)$ gives
\[
\expect[C_i] \;=\; \sum_{\Delta=0}^{N_i-1} \mu(\Delta) \;+\; S_i.
\]
The main-text form $\expect[C_i] = \beta_{\mathrm{pos}} N_i + S_i$ is the special case $\mu(\Delta) \equiv \beta_{\mathrm{pos}}$. Under a decaying kernel the correct count factor is $N_i^{\mathrm{eff}} := \beta_{\mathrm{pos}}^{-1} \sum_{\Delta=0}^{N_i-1}\mu(\Delta)$.

\subsection{Exchangeable Softmax (Uniform Baseline)}

If the logit vector $(Z_{t,1},\ldots,Z_{t,t})$ is exchangeable, symmetry plus the simplex constraint $\sum_i A_{t,i} = 1$ gives $\expect[A_{t,i} \mid t] = 1/t$ for all $i \le t$. Hence
\[
\expect[C_i] \;=\; \sum_{t=i}^T \frac{1}{t} \;=\; H_T - H_{i-1},
\]
a harmonic sum of order $\Theta(\log T)$, not $\Theta(N_i)$. The linear approximation $\beta_{\mathrm{pos}} N_i$ is valid only within a thin tail window $[T-w+1, T]$ with $w = o(T)$, where $1/t = 1/T + O(w/T^2)$.

\subsection{RoPE Kernel Structure}

If the query and key distributions are rotation-invariant, $\mathbf{q}^\top \mathbf{R}_\Delta \mathbf{k} \stackrel{d}{=} \mathbf{q}^\top \mathbf{k}$, so RoPE alone induces no decay: $\mu(\Delta) \equiv \text{const}$. Non-trivial $\mu(\Delta)$ arises from learned anisotropy of the query-key spectra. Writing the head-averaged relative-position kernel as a Fourier transform $\mu(\Delta) = \int e^{i\omega\Delta}\,d\nu(\omega)$ for a finite measure $\nu$ with $L^1$ density, Riemann--Lebesgue gives convergence of $\mu(\Delta)$ to zero as $\Delta$ grows.

\Cref{sec:appendix_rope_yarn} gives a small-$\Delta$ Gaussian surrogate matched to the finite RoPE frequency sum, not a global RoPE envelope; we use the exponential form as a tractable one-parameter surrogate that retains monotone decay. The statements on residual curvature below require only that the chosen surrogate $\mu$ is non-increasing. For the exponential kernel $\mu(\Delta) = \beta_{\mathrm{pos}}\,e^{-\lambda\Delta}$,
\[
N_i^{\mathrm{eff}} \;=\; \frac{1 - e^{-\lambda N_i}}{1 - e^{-\lambda}},
\qquad
r(N) \;:=\; \frac{N_i^{\mathrm{eff}}}{N_i} \;=\; \frac{1 - e^{-\lambda N}}{(1-e^{-\lambda})\, N}.
\]
Differentiating, $r'(N) = \frac{e^{-\lambda N}(1 + \lambda N) - 1}{(1 - e^{-\lambda}) N^2} < 0$ for all $N \ge 1$ (using $e^x > 1 + x$). Naive division by $N_i$ therefore over-corrects head and middle positions and leaves a tail-high deterministic residual $\beta_{\mathrm{pos}} r(N_i)$.

\subsection{MSE-Optimal Observation Count}

Model per-query noise as iid with variance $\sigma_A^2$. Then $\widehat{u}_i = S_i/N_i + \bar{\epsilon}_i$ with $\bar{\epsilon}_i \sim \mathcal{N}(0, \sigma_A^2/N_i)$, yielding
\[
\mathrm{MSE}_i(N) \;=\; \Bigl(1 - \tfrac{1}{N}\Bigr)^{\!2} S_i^2 \;+\; \frac{\sigma_A^2}{N}.
\]
Expanding and differentiating,
\[
\frac{d\,\mathrm{MSE}_i}{dN} \;=\; \frac{(2 S_i^2 - \sigma_A^2)\,N - 2 S_i^2}{N^3},
\]
with closed-form minimizer
\[
N_i^\star \;=\; \begin{cases} \dfrac{2 S_i^2}{2 S_i^2 - \sigma_A^2} & \text{if } 2 S_i^2 > \sigma_A^2,\\[6pt] +\infty & \text{otherwise.}\end{cases}
\]
For strong-signal tokens ($S_i^2 \gg \sigma_A^2$), the optimum approaches $N_i^\star=1$, so they are best estimated at small $N_i$, at the tail. Weak-signal tokens favor $N_i^\star = \infty$ and are harmed by debiasing at the tail.

\subsection{Correlated-Noise Extension}

Allowing $\mathrm{Corr}(\epsilon_{i,r}, \epsilon_{i,r+k}) = \rho_{i,k}$,
\[
\mathrm{Var}(\widehat{u}_i) \;=\; \frac{\sigma_A^2}{N_i}\,\Gamma_i(N_i),
\qquad
\Gamma_i(N) \;:=\; 1 + 2\sum_{k=1}^{N-1}\Bigl(1 - \tfrac{k}{N}\Bigr)\rho_{i,k}.
\]
If $\sum_k |\rho_{i,k}| < \infty$, $\Gamma_i(N_i)$ converges to $c_i := 1 + 2\sum_{k\ge 1}\rho_{i,k}$, and the independent-noise minimizer extends to $N_i^\star = 2 S_i^2 / (2 S_i^2 - c_i\sigma_A^2)$ when the denominator is positive, meaning $2 S_i^2>c_i\sigma_A^2$.

\subsection{Reverse-Depth Has Two Mechanisms}

A tail-variance-only explanation is too strong. Two counterexamples show the issue.
\begin{itemize}
\item $Y_H \sim \mathcal{N}(10, 1)$, $Y_T \sim \mathcal{N}(0, 9)$: tail variance is larger, yet no reverse-depth occurs; the head remains dominant at any high threshold.
\item $Y_H \sim \mathcal{N}(0, 1)$, $Y_T \sim \mathcal{N}(1, 1)$: variances are equal but a tail mean shift already produces over-retention.
\end{itemize}
Tail variance explosion is one sufficient mechanism. Residual mean bias from the count curvature $r(N_i)$ is another independent one. Reverse-depth failure in practice draws on both; ablations should report them separately.

\parhead{Practical reading.}
Taken together, these derivations support three qualitative checks that appear in the main experiments.
A-only repair should remove the dominant prefix drift yet expose a recency failure.
Head participation should vary across architectures even when the query proxy is held fixed.
Multi-objective pooled retrieval should saturate earlier than factorized retention because the bottleneck there is channel count, not tail-window size.

\par\vspace{0.10em}
\begin{nolinenumbers}
\noindent\begin{minipage}{\textwidth}
\centering
\captionsetup{type=table,hypcap=false,skip=2pt}
\caption{Stage~I access-support comparison at $b=0.05$ on LongBench~\citep{bai2024longbench}. Each row applies the indicated repair on top of the previous row; scores are LongBench accuracy ($\times 100$), and $\Delta$ Avg reports the incremental change in the model average.}
\label{tab:confound_decomp}
\small
\setlength{\tabcolsep}{4pt}
\renewcommand{\arraystretch}{1.05}
\begin{threeparttable}
\begin{tabular*}{\textwidth}{@{\extracolsep{\fill}}>{\raggedright\arraybackslash}p{0.36\textwidth} S[table-format=2.1] S[table-format=2.1] S[table-format=2.1] S[table-format=2.1] S[table-format=+2.1]@{}}
\toprule
\textbf{Stage~I repair} & \textbf{Llama} & \textbf{Qwen3} & \textbf{Mistral} & \textbf{Avg} & \textbf{$\Delta$ Avg} \\
\midrule
Cumulative attention                 & 8.7  & 7.6  & 5.3  & 7.2  & {--} \\
$+$\,Count debiasing                 & 29.5 & 17.5 & 30.5 & 25.8 & +18.6 \\
$+$\,Decode-proximal queries         & 32.8 & 28.3 & 38.6 & 33.2 & +7.4  \\
$+$\,Layer denoising                 & 32.9 & 30.0 & 37.3 & 33.4 & +0.2  \\
\midrule
\fullkv{} reference                  & 35.7 & 31.6 & 39.5 & 35.6 & {--} \\
\bottomrule
\end{tabular*}
\begin{tablenotes}[flushleft]
\footnotesize
\item Avg averages Llama~\citep{grattafiori2024llama}, Qwen3~\citep{yang2025qwen3}, and Mistral~\citep{jiang2024mistral}. The repair sequence aligns with \Cref{sec:exposure_bias}: cumulative attention carries exposure-count bias; count debiasing removes the prefix over-counting; decode-proximal queries replace global aggregation with the SnapKV-style observation window; layer denoising restricts to the top-attention layer band.
\end{tablenotes}
\end{threeparttable}
\end{minipage}
\end{nolinenumbers}
\par\vspace{0.10em}

\subsection{Scope and Limitations}
\label{sec:appendix_exposure_bias_scope}

This appendix studies exposure-count bias under causal softmax and simple kernel and noise abstractions.
The derivations justify the direction of the residual diagnostics, but they do not assert that one parametric kernel explains every layer or task.
The main experimental claim relies on the measured residual mean, residual variance, and task-level controls, with the exponential surrogate used only as a tractable model.

\section{Head-Sum Dilution from Sparse Retrieval Heads}
\label{sec:appendix_head_aggregation}

This appendix derives the head-aggregation dilution used in \Cref{sec:residual}.
The mechanism compares two selector orderings: selecting keys at the head level and then combining retained candidates, or summing heads before ranking keys. A retrieval head may identify the right key, while the head-summed score can still dilute that signal with inactive-head background mass.

Let $H_{\mathrm{head}}$ be the number of heads and let $A_h(i)$ denote the locally aggregated attention mass that head $h$ assigns to candidate key $i$ inside a decode-proximal candidate set of size $n$. Here $A_h(i)$ is already a scalar mass, so it is not bolded. Center each head by the uniform background,
\[
\widetilde A_h(i) := A_h(i) - \frac{1}{n}.
\]
A per-head selector, or a union of per-head top-$k$ lists, is sensitive to the largest head-level contrast,
\[
S_{\infty}(i) := \max_h \widetilde A_h(i),
\]
whereas a head-summed selector ranks by
\[
S_{\Sigma}(i) := \sum_{h=1}^{H_{\mathrm{head}}} \widetilde A_h(i).
\]
Head-aggregation dilution is the contrast loss induced when a sparse head-level signal is passed through the head-summed score $S_{\Sigma}$.

\subsection{Sparse Retrieval Signal}

Let $i^\star$ be the target key and suppose that only a collection $\mathcal R$ of heads carries target-specific retrieval evidence, with $|\mathcal R|=r$. In the stylized retrieval model,
\[
\widetilde A_h(i^\star) =
\begin{cases}
\Delta, & h \in \mathcal R,\\
0, & h \notin \mathcal R,
\end{cases}
\]
where $\Delta>0$ is the per-active-head target contrast. A per-head selector sees target contrast $\Delta$. A head-summed selector sees target contrast $r\Delta$.

The difference appears in the nuisance background. Let $j\ne i^\star$ be a competing key and write its centered background as $B_{jh}=\widetilde A_h(j)$. Two endpoint models give the dilution factors.

\paragraph{Structured dense background.}
If the nuisance key receives approximately the same positive background $\tau$ from many heads, $B_{jh}\approx\tau$, then
\[
S_{\Sigma}(j)\approx H_{\mathrm{head}}\tau,
\qquad
S_{\Sigma}(i^\star)\approx r\Delta.
\]
Thus the head-summed contrast is
\[
\frac{S_{\Sigma}(i^\star)}{S_{\Sigma}(j)}
\approx
\frac{r}{H_{\mathrm{head}}}\cdot \frac{\Delta}{\tau}.
\]
The corresponding per-head contrast is $\Delta/\tau$. Therefore summing heads before selection loses a factor
\[
D_{\mathrm{struct}}
=
\frac{r}{H_{\mathrm{head}}}.
\]

\paragraph{Iid zero-mean background.}
If instead the nuisance background is zero-mean across heads with $\operatorname{Var}(B_{jh})=\tau^2$, then
\[
\sum_h B_{jh}
\]
has standard deviation $\tau\sqrt{H_{\mathrm{head}}}$ up to the usual extreme-value factor over nuisance keys. The target remains $r\Delta$, so the standardized head-summed contrast scales as
\[
\frac{r\Delta}{\tau\sqrt{H_{\mathrm{head}}}}
=
\frac{r}{\sqrt{H_{\mathrm{head}}}}
\cdot
\frac{\Delta}{\tau}.
\]
The iid-noise endpoint gives dilution
\[
D_{\mathrm{iid}}
=
\frac{r}{\sqrt{H_{\mathrm{head}}}}.
\]

The structured endpoint is harsher because the nuisance background adds coherently across heads. The iid endpoint is milder because nuisance fluctuations add only in standard deviation. Real models can mix both settings, so the two factors are diagnostic endpoints whose constants vary by model and task.

\subsection{Why the Active-Head Count Can Be Small}

The preceding bound assumes that retrieval evidence appears in only $r$ heads. QK-norm gives one reason why this can happen. Under QK-norm, query and key directions lie on a scaled unit sphere. Write the head-level logit as
\[
z_j = \sqrt{d_h}\, c_j,
\qquad
c_j := \langle \hat{\mathbf q}, \mathbf R_{\Delta_j}\hat{\mathbf k}_j\rangle
\in[-1,1].
\]
For the target key $i^\star$, define the angular gap
\[
\delta
:=
c_{i^\star} - \max_{j\ne i^\star} c_j.
\]
The logit margin is $m=\sqrt{d_h}\,\delta$. Since the non-target softmax mass is bounded by $(n-1)e^{-m}$, the target receives at least half the candidate mass when
\[
m \ge \log(n-1),
\]
or equivalently
\begin{equation}
\delta_{0.5}(n,d_h)
=
\frac{\log(n-1)}{\sqrt{d_h}} .
\label{eq:angular_threshold}
\end{equation}
Thus, as the candidate set grows, only heads whose learned geometry clears this margin can become majority-mass retrieval heads. The bound is a sufficient condition for strong head participation; successful retrieval can still occur below it. Its role here is to explain why $r$ can be much smaller than $H_{\mathrm{head}}$.

\subsection{Participation Ratio as the Measured Active-Head Count}

We estimate the effective active-head count with the participation ratio
\begin{equation}
r_{\mathrm{eff}}
:=
\frac{(\sum_h m_h)^2}{\sum_h m_h^2}
\le H_{\mathrm{head}},
\label{eq:participation}
\end{equation}
where
\[
m_h = \sum_{q\in \mathrm{last}\,64} A_h(q,i^\star).
\]
If the retrieval mass is equally distributed over $r$ heads and negligible elsewhere, then $r_{\mathrm{eff}}=r$. If the mass is uniform over all heads, then $r_{\mathrm{eff}}=H_{\mathrm{head}}$. Substituting $r_{\mathrm{eff}}$ for the idealized $r$ gives the empirical dilution diagnostics
\[
D_{\mathrm{struct}}
\approx
\frac{r_{\mathrm{eff}}}{H_{\mathrm{head}}},
\qquad
D_{\mathrm{iid}}
\approx
\frac{r_{\mathrm{eff}}}{\sqrt{H_{\mathrm{head}}}} .
\]
\Cref{fig:head_participation} reports $r_{\mathrm{eff}}/H_{\mathrm{head}}$ on the 8k NIAH retrieval check. Low participation means the retrieval signal is head-sparse, so head-summed scoring loses contrast relative to per-head selection.

\begin{figure}[H]
  \centering
  \includegraphics[width=0.88\linewidth]{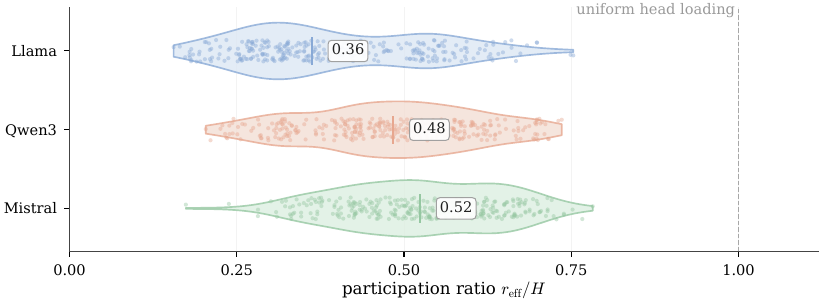}
  \caption{Effective head participation on the 8k NIAH retrieval check.}
  \label{fig:head_participation}
\end{figure}

\subsection{Consequence: Reserved Observation Window as a Keep Prior}

A fixed recent-token reserve can partly mask head-sum dilution near the decode tail.
Hard-reserving the last $W$ prefix tokens, as SnapKV~\citep{li2024snapkv} does with $W=32$, is equivalent at the decision level to adding an infinite keep prior on those indices.
Formally, for any finite scoring rule with kept score $S_i = C_i + \Lambda \cdot \mathbf{1}\{i \in \mathcal{W}\}$, if $|\mathcal{W}|\le K$ and $\Lambda$ exceeds the maximum score gap, top-$K$ on $S$ includes $\mathcal{W}$.
With a finite $\Lambda$, minimizing squared loss on the tail window picks $\Lambda^\star = -\beta_{\mathrm{tail}}$, giving a local MSE reduction of $\beta_{\mathrm{tail}}^2$.
Because the head-aggregation dilution factor $r_{\mathrm{eff}}/H_{\mathrm{head}}$ and the phase-mismatch factor $m/M_q$ both appear multiplicatively in $|\beta_{\mathrm{tail}}|$, reserved-window benefit scales quadratically with their product.
Under an iid or nonnegatively correlated noise endpoint, average pooling along the prefix axis, also used by SnapKV with kernel 7, contributes only a constant-factor variance reduction (iid gives $1/7$, perfectly correlated noise gives $1$); variance reduction alone cannot cancel an $O(1)$ bias.

\subsection{Consequence: Conjunctive Amplification on Multi-needle Retrieval}

Head-sum dilution becomes more visible when the task requires several evidence tokens to survive together.
The two-key needle-retrieval task grades correctness by conjunction over two needle tokens.
If each needle is retained independently with probability $p$, end-task accuracy scales as $p^2$, amplifying proxy cliffs multiplicatively.
This is why modest single-needle differences can become categorical end-task separations in multi-needle retrieval.
The independence calculation is a simple amplification model; correlated retention changes the exponent while preserving the qualitative fact that conjunction magnifies selection errors.

\subsection{Scope and Limitations}
\label{sec:appendix_head_aggregation_scope}

This appendix analyzes head aggregation as a contrast-loss mechanism under simplified retrieval and noise models.
The bounds explain when head-summed scoring degrades relative to per-head selection. Their scope is the local contrast loss; architecture-level comparisons require separate selector-contract tests.
We use the empirical participation ratio as a marker for the measured models.

\section{Layer-wise Analysis}
\label{sec:app_interpretability}

We vary the layer set used by the Module~I access-support measurement. The sweep starts from the top layer, adds lower layers, and re-evaluates against a leave-one-out classification reference. A brittle single-layer choice would produce a monotone alignment gain as lower layers are added. The observed curves move less across layer fractions than across model identities.

\subsection{Layer-Scope Sensitivity of Access Alignment}
\label{sec:app_i1_layer_inclusion_sweep}

The Module~I access measurement depends on which layers are included when aggregating attention-derived evidence. We sweep this layer scope as a local sensitivity check for the access measurement. \Cref{fig:i1_upperfrac_alignment} reports two complementary top-tail metrics. NDCG@5\% is rank-sensitive and asks whether reference-relevant positions are ordered near the top of the access score. J@5\% is support-sensitive and asks whether the top 5\% selected positions overlap with the reference top-support set.

Including lower layers does not yield a monotone improvement. Across the sweep, model-to-model differences are larger than the within-model changes induced by layer scope. Layer scope can affect individual access-support measurements; the main Module~II finding is measured under a fixed selector contract and replaces only the value-channel ranking slot.

\IfFileExists{figures/fig_i1_upperfrac_alignment_a.pdf}{%
\begin{figure}[!htbp]
\centering
\IfFileExists{figures/fig_i1_upperfrac_alignment_legend.pdf}{%
\includegraphics[width=0.27\linewidth]{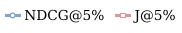}\par\vspace{-0.55em}
}{}
\makebox[\textwidth][c]{%
\begin{minipage}[t]{0.36\textwidth}
\centering
\includegraphics[width=\linewidth]{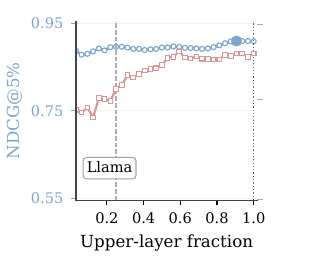}
\end{minipage}
\hspace{-0.040\textwidth}
\begin{minipage}[t]{0.36\textwidth}
\centering
\includegraphics[width=\linewidth]{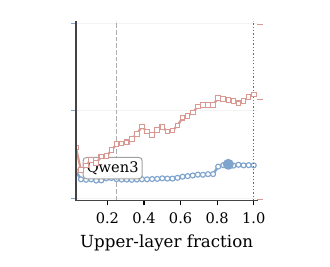}
\end{minipage}
\hspace{-0.040\textwidth}
\begin{minipage}[t]{0.36\textwidth}
\centering
\includegraphics[width=\linewidth]{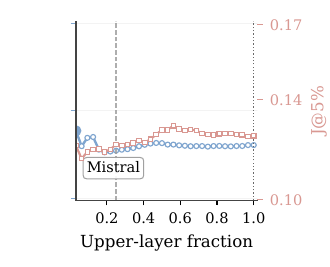}
\end{minipage}
}
	\caption{Layer-scope sensitivity of the Module~I access-support measurement. NDCG@5\% measures rank alignment in the top score tail, while J@5\% measures top-support overlap. Expanding the included layer range does not produce a monotone gain, and model-to-model variation is larger than within-model layer-scope variation.}
\label{fig:i1_upperfrac_alignment}
\end{figure}
}{}

\subsection{Scope and Limitations}
\label{sec:app_interpretability_scope}

The measurement is limited to the upper-layer inclusion rule used by the access-support calculation. Alternative reference definitions require a separate sensitivity check. The conclusion is local to this mechanism check and stops short of a general ranking of layer importance in long-context retrieval.

\section{Multi-Objective Retrieval Extended Results}
\label{sec:oos_prediction}

The multi-target retrieval results stress the support-closure prediction in \Cref{sec:ablation}. \Cref{sec:app_oos_mrt32k} reports the 32k-context NeedleBench M-RT evaluation~\citep{lineedlebench}, where pooled proxies lose support when targets split and question-routed proxies preserve more of it. \Cref{sec:app_ruler} reports an 8k-context RULER check~\citep{hsieh2024ruler} on whether logged movement concentrates on branched retrieval.

\subsection{NeedleBench M-RT at 32k context}
\label{sec:app_oos_mrt32k}

\Cref{tab:needlebench_mrt_32k_focus,tab:mrt_support_format} report the 32k-context NeedleBench M-RT stress test. The first table gives slot-set exact scores, with exact-hit rate in parentheses, under $n{=}4$ retrieval objectives, budget $b{=}0.10$, and 8 samples per model. The second separates retained-slot support from exact-format hits, distinguishing evidence recovery from answer-format recovery. All rows share the same prefill capture, context truncation, and budget contract. SnapKV~\citep{li2024snapkv} is the observation-window reference.

The Module~II value-channel replacement is evaluated in two pooling modes. MII-pooled uses the canonical pooled proxy bank $\mathcal U_{\mathrm{pool}}=\mathcal U^{\mathrm{tail}}$ and applies the same pooled block score used by the main Module~II recipe. MII-QD denotes question-routed Module~II. It sets $\mathcal U_{\mathrm{QD}}=\mathrm{DetectQuestionPositions}(x)$, falling back to $\mathcal U^{\mathrm{tail}}$ when no decoded-question span is detected, then uses the same pooled scoring and top-$k$ projection as MII-pooled.

MII-QD separates from SnapKV and MII-pooled. Averaged across the three models, it reaches $72.2$ slot-set exact; SnapKV and MII-pooled are at $61.1$ and $52.8$. Its exact-hit rate rises to $54.2$ from $25.0$, with the larger support-side gap pointing to evidence recovery beyond formatting.

The separation is consistent with the $r{=}1$ labeled-law TV floor in \Cref{thm:r_channel_tv}. The theorem lower-bounds labeled-law total variation; score ceilings are outside its scope. The result points to pooled-proxy failure when target branching grows. Single-channel baselines that change other contract axes require separate LongBench surfaces: H2O and H2O-debiased are access-support variants, while PyramidKV, Ada-KV, and CAKE change allocation. Their LongBench surfaces are evaluated in \Cref{tab:baseline_viability} and discussed in \Cref{sec:comparison_surfaces}.

\IfFileExists{tables/tab_needlebench_mrt_32k_focus.tex}{
\par\vspace{0.10em}
\begin{nolinenumbers}
\noindent\begin{minipage}{\textwidth}
\centering
\captionsetup{type=table,hypcap=false,skip=2pt}
\caption{Per-model split of the 32k NeedleBench M-RT~\citep{lineedlebench} stress slice. Slot-set exact and exact-hit rate are reported separately for each model.}
\label{tab:needlebench_mrt_32k_focus}
\scriptsize
\setlength{\tabcolsep}{3pt}
\renewcommand{\arraystretch}{1.04}
\begin{threeparttable}
\begin{tabular*}{\textwidth}{@{\extracolsep{\fill}}l S[table-format=3.1]S[table-format=2.1]S[table-format=2.1]S[table-format=2.1] S[table-format=2.1]S[table-format=2.1]S[table-format=2.1]S[table-format=2.1]@{}}
\toprule
& \multicolumn{4}{c}{\textbf{Slot-set exact (\%)}} & \multicolumn{4}{c}{\textbf{Exact-hit rate (\%)}} \\
\cmidrule(lr){2-5}\cmidrule(lr){6-9}
\textbf{Model} & \textbf{\fullkv{}} & \textbf{SnapKV} & \textbf{MII-pooled} & \textbf{MII-QD} & \textbf{\fullkv{}} & \textbf{SnapKV} & \textbf{MII-pooled} & \textbf{MII-QD} \\
\midrule
Qwen3-8B~\citep{yang2025qwen3} & 100.0 & 62.5 & 66.7 & 75.0 & 37.5 & 25.0 & 37.5 & 50.0 \\
Llama-3.1-8B~\citep{grattafiori2024llama} & 100.0 & 66.7 & 50.0 & 75.0 & 75.0 & 50.0 & 37.5 & 62.5 \\
Mistral-7B~\citep{jiang2024mistral} & 87.5  & 54.2 & 41.7 & 66.7 & 0.0  & 0.0  & 0.0  & 50.0 \\
\midrule
\textbf{Average} & 95.8 & 61.1 & 52.8 & 72.2 & 37.5 & 25.0 & 25.0 & 54.2 \\
\bottomrule
\end{tabular*}
\begin{tablenotes}[flushleft]
\footnotesize
\item The parenthesized exact-hit metric is split into its own numeric block so that support recovery and exact-format recovery can be read independently.
\end{tablenotes}
\end{threeparttable}
\end{minipage}
\end{nolinenumbers}
\par\vspace{0.10em}
}{}

\par\vspace{0.10em}
\begin{nolinenumbers}
\noindent\begin{minipage}{\textwidth}
\centering
\captionsetup{type=table,hypcap=false,skip=2pt}
\caption{Question-routed Module~II recovers the multi-target support that pooled proxies lose. Summarizes the averages expanded in \Cref{tab:needlebench_mrt_32k_focus}.}
\label{tab:mrt_support_format}
\footnotesize
\setlength{\tabcolsep}{4pt}
\renewcommand{\arraystretch}{1.06}
\begin{threeparttable}
\begin{tabular*}{\textwidth}{@{\extracolsep{\fill}}l S[table-format=2.1]S[table-format=2.1]S[table-format=2.1]S[table-format=2.1]@{}}
\toprule
\textbf{Metric} & \multicolumn{1}{c}{\makecell[c]{\textbf{reference}\\\fullkv{}}} & \multicolumn{1}{c}{\makecell[c]{\textbf{fixed contract}\\\textsc{SnapKV}}} & \multicolumn{1}{c}{\makecell[c]{\textbf{pooled proxy}\\\textsc{MII-pooled}}} & \multicolumn{1}{c}{\makecell[c]{\textbf{question-routed}\\\textsc{MII-QD}}} \\
\midrule
Slot-set exact (\%) & 95.8 & 61.1 & 52.8 & 72.2 \\
Exact-hit rate (\%) & 37.5 & 25.0 & 25.0 & 54.2 \\
\bottomrule
\end{tabular*}
\begin{tablenotes}[flushleft]
\footnotesize
\item The table keeps the original averages unchanged; only the column headers are rewritten so that each method name carries its evidential role.
\end{tablenotes}
\end{threeparttable}
\end{minipage}
\end{nolinenumbers}
\par\vspace{0.10em}

\par\vspace{0.15em}
\Needspace{6\baselineskip}
\refstepcounter{subsection}
\phantomsection
\noindent{\normalsize\bfseries\thesubsection\quad RULER branch-stress check at 8k context}
\par\nobreak\vspace{0.4ex}
\label{sec:app_ruler}

The RULER~\citep{hsieh2024ruler} branch-stress check is included as a directional 8k-context check within the LongBench budget, evaluated on the three 8B-class models in the main grid. The archived run stores three ordered score columns without semantic method names, so \Cref{tab:ruler_branch_stress} is used only to test whether the logged movement concentrates on branched rather than single-key retrieval.

Across the logged score columns, the positive movement is dominated by the two-key needle-retrieval task. On the single-key companion task, the logged changes are zero or near zero under the same budgets and model set. Removing the two-key task shrinks the average movement toward zero across all three models.

A pooled single-query proxy does not cover the multi-token query tail captured by the SnapKV observation window. The same direction holds at smaller context, so the binding constraint is the number of concurrent retrieval objectives. The 8k check is consistent with the M-RT direction but is not used to compare named selectors or estimate absolute multi-target retrieval ability.

\IfFileExists{tables/tab_ruler_branch_stress.tex}{
\par\vspace{0.10em}
\begin{nolinenumbers}
\noindent\begin{minipage}{\textwidth}
\centering
\captionsetup{type=table,hypcap=false,skip=2pt}
\caption{RULER~\citep{hsieh2024ruler} task breakdown for the 8k branch-stress check. The three logged scores are kept in their recorded order.}
\label{tab:ruler_branch_stress}
\scriptsize
\setlength{\tabcolsep}{3pt}
\renewcommand{\arraystretch}{1.03}
\begin{threeparttable}
\begin{tabular*}{\textwidth}{@{\extracolsep{\fill}}l S[table-format=+1.3] S[table-format=+1.3] S[table-format=+1.3] S[table-format=+1.3] S[table-format=+1.3] S[table-format=+1.3] S[table-format=+1.3] S[table-format=+1.3] S[table-format=+1.3]@{}}
\toprule
& \multicolumn{3}{c}{\textbf{Llama-3.1-8B~\citep{grattafiori2024llama}}} & \multicolumn{3}{c}{\textbf{Mistral-7B~\citep{jiang2024mistral}}} & \multicolumn{3}{c}{\textbf{Qwen3-8B~\citep{yang2025qwen3}}} \\
\cmidrule(lr){2-4}\cmidrule(lr){5-7}\cmidrule(lr){8-10}
\textbf{RULER task} & \textbf{score-1} & \textbf{score-2} & \textbf{score-3} & \textbf{score-1} & \textbf{score-2} & \textbf{score-3} & \textbf{score-1} & \textbf{score-2} & \textbf{score-3} \\
\midrule
niah\_single\_1 & -0.020 & -0.020 & +0.000 & +0.000 & +0.000 & +0.000 & +0.000 & +0.000 & +0.000 \\
\textbf{niah\_multikey\_2} & +0.920 & +0.640 & +0.400 & +0.420 & +0.440 & +0.400 & +0.660 & +0.740 & +0.760 \\
VT  & +0.140 & +0.048 & +0.044 & -0.008 & -0.008 & -0.004 & +0.176 & +0.120 & +0.060 \\
CWE & -0.004 & -0.020 & +0.080 & -0.248 & -0.052 & +0.138 & -0.016 & +0.070 & +0.162 \\
FWE & +0.193 & +0.147 & -0.007 & +0.080 & +0.007 & +0.027 & -0.033 & -0.007 & -0.033 \\
\midrule
\textbf{Average} & +0.246 & +0.159 & +0.103 & +0.049 & +0.077 & +0.112 & +0.157 & +0.185 & +0.190 \\
\textbf{Avg. excl. multikey} & +0.077 & +0.039 & +0.029 & -0.044 & -0.013 & +0.040 & +0.032 & +0.046 & +0.047 \\
\bottomrule
\end{tabular*}
\begin{tablenotes}[flushleft]
\footnotesize
\item Each cell was originally encoded as three ordered scores. The table keeps the logged order without assigning unsupported semantic names.
\end{tablenotes}
\end{threeparttable}
\end{minipage}
\end{nolinenumbers}
\par\vspace{0.10em}
}{}

\subsection{Scope and Limitations}
\label{sec:app_oos_scope}

The M-RT results show a directional separation between pooled and question-routed proxies under multi-target retrieval at 32k context; the RULER 8k check shows that logged movement concentrates on the multikey task. They do not identify the optimal proxy bank for arbitrary multi-target distributions, do not assign semantic names to the logged RULER score columns, and do not target absolute ceilings under retrieval-specific evaluators. The Stage~III diagnosis applies to contracts that fix the observation window, per-head budget allocation, and deterministic top-$k$ projection; transfer to allocation-changing contracts is reported on LongBench in \Cref{sec:app_pyramidkv_contract}.

\section{Conditional Split Provenance}
\label{sec:app_conditional_provenance}

\subsection{Stage Evidence Registry}
\label{sec:app_stage_registry}

\par\vspace{0.10em}
\begin{nolinenumbers}
\noindent\begin{minipage}{\textwidth}
\centering
\captionsetup{type=table,hypcap=false,skip=2pt}
\caption{Stage evidence registry for the fixed selector-contract diagnostic.}
\label{tab:stage_evidence_coverage}
\scriptsize
\setlength{\tabcolsep}{2pt}
\renewcommand{\arraystretch}{1.05}
\begin{tabular*}{\textwidth}{@{\extracolsep{\fill}}>{\raggedright\arraybackslash}p{0.13\textwidth}>{\raggedright\arraybackslash}p{0.21\textwidth}>{\raggedright\arraybackslash}p{0.27\textwidth}>{\raggedright\arraybackslash}p{0.20\textwidth}>{\raggedright\arraybackslash}p{0.13\textwidth}@{}}
\toprule
\textbf{Axis} & \textbf{Contract change} & \textbf{Evidence} & \textbf{Inference} & \textbf{Source} \\
\midrule
\textbf{Stage~II, $u_{\mathrm{val}}$} & value-channel replacement only & $H_c>0$: $72.6\%$, mean $+0.54$ pp & value repair under fixed support & \Cref{tab:mii_headroom} \\
Leverage sign comparison & fixed contract; sign agreement on 264 cells & NoLev sign matches MII on 239 of 264 cells ($90.5\%$); leverage on 162 of 264 cells ($61.4\%$) & value channel carries the predictive sign; leverage is conditional & \Cref{tab:stage_ablation} \\
Support coupling & fixed contract; stratify by $\phi$ & low $\phi$: $76.4\%$, $+0.69$ pp; high $\phi$: $42.9\%$, $-0.59$ pp & joint evidence can block repair & \Cref{tab:mii_phi_stratification,tab:phi_headroom_cross} \\
Stage~III, lattice & fixed block order; token-fill only & slack removed; mean $+0.12$ pp, median $0.00$ pp & token count is not support closure & \Cref{fig:stageiii_separability} \\
Stage~I, $u_{\mathrm{acc}}$ & change access-support law & count debias $7.2 \to 25.8$; decode-proximal $33.2$ & bottleneck moves upstream & \Cref{fig:exposure_tail_residual,fig:head_participation,fig:stagei_predictability}; \Cref{tab:confound_decomp} \\
Allocation drift & layer, cascade, or head budget changes & $+0.84$, $+0.30$, $+0.15$ pp & transfer under changed projection & \Cref{tab:baseline_viability} \\
Cross-contract comparison & PyramidKV-local reference margin & $H_B>0$: $79.5\%$, mean $+0.96$ pp & signed side transfers locally & \Cref{tab:baseline_conditional_gap} \\
Multi-target Stage~III & same capture; branched objective & pooled to QD: exact $52.8$ to $72.2$; exact-hit $25.0$ to $54.2$ & pooled score loses branch evidence & \Cref{tab:needlebench_mrt_32k_focus,tab:mrt_support_format,tab:ruler_branch_stress} \\
Negative controls & remove $T^u$; score boundary cases & NoLev reaches $1.70\times$ no-value ceiling & split depends on value channel & \Cref{tab:u_channel_ablation,tab:boundary_reference} \\
Leverage strength & same tasks and budgets; vary alpha & 96 completed cells; damped best counts 15, 10, and 7 & leverage is stage-resolved & \Cref{tab:p31_leverage_damping_summary} \\
Budget and reference surfaces & boundary-heavy four-budget diagnostics & 6 of 36 monotone; 12 of 36 local dips; 6 of 24 portable MII wins & averages hide phase and reference effects & \Cref{fig:budget_phase}; \Cref{tab:p30_boundary_summary} \\
\bottomrule
\end{tabular*}
\end{minipage}
\end{nolinenumbers}
\par\nopagebreak\vspace{0.05em}

\subsection{Predictor Provenance and Margin Split}
\label{sec:app_prereg_provenance}
\label{sec:app_confirmatory_tables}

The support-coupling index $\phi(x)$ comes from the fixed grammar in \Cref{sec:exposure_bias}, which counts code-line density, label-marker density, and delimiter-bounded record density. It measures the density of structural evidence units whose utility is not well represented by isolated token retention. We use the same index for the clean Module~II evaluation, the additive control, and the bounded-correction check. The signed reference margin is $m_c=\fullkv{}_c-\mathrm{Host}_c$, and the predictor is $H_c=\operatorname{sign}(m_c)$; in the SnapKV contract, $\mathrm{Host}_c=\mathrm{SnapKV}_c$. \Cref{tab:phi_headroom_cross} gives the raw $\phi(x)\times H_c$ split, and \Cref{tab:cluster_stats} treats task clusters as the resampling unit. The two predictors play different roles. The signed reference margin measures whether the value-channel replacement has room to help, while high support coupling identifies cells where a scalar value-channel correction is less reliable because useful states must be retained jointly.

\par\vspace{0.10em}
\begin{nolinenumbers}
\noindent\begin{minipage}{\textwidth}
\centering
\captionsetup{type=table,hypcap=false,skip=2pt}
\caption{Support coupling explains when positive reference margin still fails. Companion to \Cref{tab:mii_headroom}.}
\label{tab:mii_phi_stratification}
\scriptsize
\setlength{\tabcolsep}{2pt}
\renewcommand{\arraystretch}{1.06}
\begin{threeparttable}
\begin{tabular*}{\textwidth}{@{\extracolsep{\fill}}l S[table-format=2.0] S[table-format=2.1] S[table-format=+1.2] S[table-format=2.0] S[table-format=2.1] S[table-format=+1.2]@{}}
\toprule
& \multicolumn{3}{c}{\textbf{Positive margin $H_c>0$}} & \multicolumn{3}{c}{\textbf{Nonpositive margin $H_c\le 0$}} \\
\cmidrule(lr){2-4}\cmidrule(lr){5-7}
\textbf{$\phi$ bucket} & \textbf{Cells} & \textbf{Pos. shift (\%)} & \textbf{Mean \(\Delta\) (pp)} & \textbf{Cells} & \textbf{Pos. shift (\%)} & \textbf{Mean \(\Delta\) (pp)} \\
\midrule
low $\phi < 0.10$ & 55 & 76.4 & +0.69 & 23 & 34.8 & -0.49 \\
high $\phi \ge 0.10$ & 7 & 42.9 & -0.59 & 11 & 27.3 & -0.34 \\
\bottomrule
\end{tabular*}
\begin{tablenotes}[flushleft]
\footnotesize
\item The raw positive, nonpositive, and tie counts behind the same split are expanded in \Cref{tab:phi_headroom_cross}.
\end{tablenotes}
\end{threeparttable}
\end{minipage}
\end{nolinenumbers}
\par\nopagebreak\vspace{0.10em}

\par\vspace{0.10em}
\begin{nolinenumbers}
\noindent\begin{minipage}{\textwidth}
\centering
\captionsetup{type=table,hypcap=false,skip=2pt}
\caption{Support-coupling split under signed reference margin.}
\label{tab:phi_headroom_cross}
\scriptsize
\setlength{\tabcolsep}{2pt}
\renewcommand{\arraystretch}{1.06}
\begin{threeparttable}
\begin{tabular*}{\textwidth}{@{\extracolsep{\fill}}l l S[table-format=2.0] S[table-format=2.0] S[table-format=2.0] S[table-format=1.0] S[table-format=2.1] S[table-format=+1.2]@{}}
\toprule
\textbf{$\phi$ bucket} & \textbf{Margin side} & \textbf{Cells} & \textbf{Pos.} & \textbf{Nonpos.} & \textbf{Tie} & \textbf{Pos. shift (\%)} & \textbf{Mean \(\Delta\) (pp)} \\
\midrule
low $\phi < 0.10$ & $H_c > 0$ & 55 & 42 & 10 & 3 & 76.4 & +0.69 \\
high $\phi \ge 0.10$ & $H_c > 0$ & 7 & 3 & 3 & 1 & 42.9 & -0.59 \\
low $\phi < 0.10$ & $H_c \le 0$ & 23 & 8 & 9 & 6 & 34.8 & -0.49 \\
high $\phi \ge 0.10$ & $H_c \le 0$ & 11 & 3 & 8 & 0 & 27.3 & -0.34 \\
\bottomrule
\end{tabular*}
\begin{tablenotes}[flushleft]
\footnotesize
\item Counts split positive, nonpositive, and tied outcome cells before reporting the same rate-and-mean summary.
\end{tablenotes}
\end{threeparttable}
\end{minipage}
\end{nolinenumbers}
\par\vspace{0.10em}

\par\vspace{0.10em}
\begin{nolinenumbers}
\noindent\begin{minipage}{\textwidth}
\centering
\footnotesize
\setlength{\tabcolsep}{4pt}
\renewcommand{\arraystretch}{1.06}
\captionsetup{type=table,hypcap=false,skip=2pt}
\caption{Cluster-robust intervals for the signed reference-margin split. Confirms \Cref{tab:mii_headroom} when LongBench tasks are the resampling unit.}
\label{tab:cluster_stats}
\begin{threeparttable}
\begin{tabular*}{\textwidth}{@{\extracolsep{\fill}}>{\raggedright\arraybackslash}p{0.48\textwidth} l l@{}}
\toprule
\textbf{Statistic} & \textbf{Estimate} & \textbf{Task-cluster 95\% CI} \\
\midrule
$H_c>0$ positive-shift rate & $72.6\%$ & [58.2, 85.3] \\
$H_c\le 0$ positive-shift rate & $32.4\%$ & [19.5, 48.6] \\
Positive-rate gap $(H_c>0)-(H_c\le 0)$ & $+40.2$ pp & [$+18.9$, $+60.0$] \\
Direction-match rate, all 96 cells & $70.8\%$ & [60.4, 81.2] \\
Mean $\Delta$ gap $\overline{\Delta}_{H_c>0}-\overline{\Delta}_{H_c\le 0}$ & $+0.99$ pp & [$+0.57$, $+1.41$] \\
Pearson $r(m_c,\Delta)$ & $0.41$ & [$+0.22$, $+0.59$] \\
\midrule
\multicolumn{3}{l}{\emph{Threshold sweep on $|m_c|\ge\tau$}} \\
$\tau = 0$ & $+40.2$ pp & [$+18.9$, $+60.0$] \\
$\tau = 0.1$ pp & $+36.1$ pp & [$+17.2$, $+53.0$] \\
$\tau = 0.25$ pp & $+36.1$ pp & [$+18.3$, $+51.8$] \\
$\tau = 0.5$ pp & $+39.6$ pp & [$+20.0$, $+57.3$] \\
\midrule
\multicolumn{3}{l}{\emph{Leave-one-task-out extrema for the positive-rate gap}} \\
Worst-case dropped task & $+34.2$ pp & drop passage\_retrieval\_en \\
Best-case dropped task & $+46.0$ pp & drop samsum \\
\bottomrule
\end{tabular*}
\begin{tablenotes}[flushleft]
\footnotesize
\item The first block reports cluster-bootstrap estimates. The second and third blocks retain the threshold and leave-one-task-out checks at lower visual priority.
\end{tablenotes}
\end{threeparttable}
\end{minipage}
\end{nolinenumbers}
\par\vspace{0.10em}

\subsection{Robustness Checks for the Margin Split}
\label{sec:app_headroom_robustness_audits}

\Cref{tab:permutation_null,tab:leave_one_task_out} check whether the signed reference-margin gap comes from label counts or from a single task. The permutation null keeps the same 96 matched outcomes and the same positive and nonpositive split, then shuffles the labels. The leave-one-task-out table deletes each LongBench task in turn. The observed direction-match and positive-rate gaps lie outside the permutation null, and the signed reference-margin gap remains positive after every one-task deletion. The support-coupling controls below add a granularity check. Task-level resampling preserves the high-$\phi$ direction, while prompt-level splitting weakens once the task template is fixed.

\par\vspace{0.10em}
\begin{nolinenumbers}
\noindent\begin{minipage}{\textwidth}
\centering
\footnotesize
\setlength{\tabcolsep}{4pt}
\renewcommand{\arraystretch}{1.08}
\captionsetup{type=table,hypcap=false,skip=2pt}
\caption{Permutation-null check for the fixed signed reference-margin partition. Confirms that the observed split is not reproduced by label shuffling on the same 96 cells.}
\label{tab:permutation_null}
\begin{threeparttable}
\begin{tabular*}{\textwidth}{@{\extracolsep{\fill}}>{\raggedright\arraybackslash}p{0.36\textwidth} S[table-format=+2.1] S[table-format=+2.1] l c@{}}
\toprule
\textbf{Statistic} & \textbf{Observed} & \textbf{Null mean} & \textbf{Null 95\% interval} & \textbf{One-sided $p$} \\
\midrule
Direction-match rate (\%) & 70.8 & 52.5 & [43.8, 62.5] & $< 0.001$ \\
Positive-rate gap $(H_c>0)-(H_c\le0)$ (pp) & +40.2 & +0.1 & [$-19.0$, $+22.0$] & $< 0.001$ \\
\bottomrule
\end{tabular*}
\begin{tablenotes}[flushleft]
\footnotesize
\item The null center and null interval are separated so that displacement and dispersion remain distinct.
\end{tablenotes}
\end{threeparttable}
\end{minipage}
\end{nolinenumbers}
\par\vspace{0.10em}

\par\vspace{0.10em}
\begin{nolinenumbers}
\noindent\begin{minipage}{\textwidth}
\centering
\scriptsize
\setlength{\tabcolsep}{2pt}
\renewcommand{\arraystretch}{1.07}
\captionsetup{type=table,hypcap=false,skip=2pt}
\caption{Leave-one-task-out stability of the signed reference-margin split. Confirms \Cref{tab:mii_headroom} after deleting each LongBench task in turn.}
\label{tab:leave_one_task_out}
\begin{threeparttable}
\begin{tabular*}{\textwidth}{@{\extracolsep{\fill}}l S[table-format=2.1] S[table-format=2.1] S[table-format=+2.1] S[table-format=+1.2]@{}}
\toprule
\textbf{Dropped task} & \textbf{$H_c>0$ pos. (\%)} & \textbf{$H_c\le0$ pos. (\%)} & \textbf{Rate gap (pp)} & \textbf{Mean gap (pp)} \\
\midrule
qasper                 & 69.6 & 32.4 & +37.3 & +0.88 \\
narrativeqa            & 70.7 & 31.2 & +39.4 & +0.96 \\
multifieldqa\_en      & 72.4 & 34.4 & +38.0 & +0.97 \\
hotpotqa               & 70.7 & 31.2 & +39.4 & +0.97 \\
2wikimqa               & 72.4 & 31.2 & +41.2 & +1.04 \\
musique                & 75.4 & 33.3 & +42.1 & +0.94 \\
gov\_report           & 69.6 & 32.4 & +37.3 & +0.94 \\
qmsum                  & 73.7 & 30.3 & +43.4 & +1.03 \\
multi\_news           & 69.6 & 32.4 & +37.3 & +0.94 \\
samsum                 & 74.1 & 28.1 & +46.0 & +1.06 \\
trec                   & 74.6 & 32.3 & +42.3 & +0.91 \\
triviaqa               & 74.6 & 32.3 & +42.3 & +1.04 \\
passage\_count        & 74.6 & 32.3 & +42.3 & +1.05 \\
passage\_retrieval\_en & 72.1 & 37.9 & +34.2 & +0.92 \\
lcc                    & 73.3 & 33.3 & +40.0 & +1.03 \\
repobench-p            & 73.3 & 33.3 & +40.0 & +1.05 \\
\midrule
\textbf{minimum gap across LOO} & 72.1 & 37.9 & +34.2 & +0.92 \\
\textbf{maximum gap across LOO} & 74.1 & 28.1 & +46.0 & +1.06 \\
\bottomrule
\end{tabular*}
\begin{tablenotes}[flushleft]
\footnotesize
\item Each row drops six cells from the 96-cell grid ($3$ models $\times$ $2$ budgets) for the named LongBench task, matching the source-file comment.
\end{tablenotes}
\end{threeparttable}
\end{minipage}
\end{nolinenumbers}
\par\vspace{0.10em}

\refstepcounter{subsection}
\phantomsection
\noindent{\normalsize\bfseries\thesubsection\quad Support-Coupling and Margin Controls}
\par\nobreak\vspace{0.8ex}
\label{sec:app_phi_headroom_tables}
\label{sec:app_phi_within_task}

\Cref{tab:phi_headroom_cross,tab:phi_granularity} separate cell-level and prompt-level behavior. High-$\phi$ cells have a lower positive-shift rate than low-$\phi$ cells overall (one-sided Fisher $p=0.0173$). Within positive-margin cells, high-$\phi$ reaches $42.9\%$ while low-$\phi$ reaches $76.4\%$ ($1.8{\times}$; one-sided Fisher $p=0.0823$). The cell-level gap is $-33.5$\,pp on all tasks, $-43.0$\,pp after removing code tasks, and $-42.6$\,pp after also removing retrieval tasks.

\Cref{tab:phi_granularity} gives the prompt-level counterpart on positive-margin prompt-budget units. The direction is the same but smaller: $-13.9$\,pp overall, $-4.7$\,pp after removing code, and $-4.8$\,pp after also removing retrieval. This scale difference follows from the definition of $\phi(x)$. Task templates create large between-family variation at cell granularity, while most non-code tasks have little within-task variation. Code contributes many high-$\phi$ prompt units, so removing code reduces the prompt-level effect while leaving the cell-level direction intact. The main $\phi$ effect sits at task and template granularity, with a weaker same-direction prompt-level trace.

\par\vspace{0.10em}
\begin{nolinenumbers}
\noindent\begin{minipage}{\textwidth}
\centering
\captionsetup{type=table,hypcap=false,skip=2pt}
\caption{Within-family and prompt-level aggregations both preserve the $\phi$ direction. Companion to \Cref{tab:phi_headroom_cross} at family granularity.}
\label{tab:phi_granularity}
\footnotesize
\setlength{\tabcolsep}{3pt}
\renewcommand{\arraystretch}{1.06}
\begin{threeparttable}
\begin{tabular*}{\textwidth}{@{\extracolsep{\fill}}l l S[table-format=4.0] S[table-format=2.1] S[table-format=2.1] S[table-format=+2.1]@{}}
\toprule
\textbf{Granularity} & \textbf{Slice} & \textbf{Units} & \textbf{Low-$\phi$ pos. (\%)} & \textbf{High-$\phi$ pos. (\%)} & \textbf{Gap (pp)} \\
\midrule
Cell & All tasks & 62 & 76.4 & 42.9 & -33.5 \\
Cell & Non-code only & 58 & 76.4 & 33.3 & -43.0 \\
Cell & Non-code, non-retrieval & 57 & 75.9 & 33.3 & -42.6 \\
Prompt level & All tasks & 2743 & 57.4 & 43.4 & -13.9 \\
Prompt level & Non-code only & 2335 & 57.4 & 52.6 & -4.7 \\
Prompt level & Non-code, non-retrieval & 2327 & 57.4 & 52.6 & -4.8 \\
\bottomrule
\end{tabular*}
\begin{tablenotes}[flushleft]
\footnotesize
\item Negative gaps mean the high-$\phi$ bucket has a lower positive-shift rate.
\end{tablenotes}
\end{threeparttable}
\end{minipage}
\end{nolinenumbers}
\par\vspace{0.10em}

\par\vspace{0.15em}
\refstepcounter{subsection}
\phantomsection
\noindent{\normalsize\bfseries\thesubsection\quad Cross-Contract Transfer and Identity Checks}
\par\nobreak\vspace{0.4ex}
\label{sec:app_pyramidkv_contract}
\label{sec:app_identity_checks}

\Cref{tab:app_pyramidkv_contract,tab:baseline_conditional_gap} report cross-contract transfer and identity coverage. \Cref{tab:app_pyramidkv_contract} separates the contract-local reference margin, adapter shifts, and outcome counts by model-task pair. The PyramidKV~\citep{cai2024pyramidkv} evaluation tests whether the positive-margin prediction survives a changed allocation contract. Over the full 96-cell grid, using the contract-local reference margin $m_B=\fullkv{}-\mathrm{PyramidKV}$ with $H_B=\operatorname{sign}(m_B)$, PyramidKV+MII is positive on $79.5\%$ of positive-margin cells with a mean $+0.96$\,pp lift. The SnapKV~\citep{li2024snapkv} contract result in \Cref{sec:exp_module2} has the same direction, with a $72.6\%$ positive-shift rate at mean $+0.54$\,pp.

\par\vspace{0.10em}
\begin{nolinenumbers}
\noindent\begin{minipage}{\textwidth}
\centering
\captionsetup{type=table,hypcap=false,skip=2pt}
\caption{Cross-instantiation breakdown for the PyramidKV transfer rows.}
\label{tab:app_pyramidkv_contract}
\footnotesize
\setlength{\tabcolsep}{3pt}
\renewcommand{\arraystretch}{1.05}
\begin{threeparttable}
\begin{tabular*}{\textwidth}{@{\extracolsep{\fill}}l l S[table-format=+2.2] S[table-format=+1.2] S[table-format=+1.2] c S[table-format=2.0] S[table-format=2.0] S[table-format=2.0]@{}}
\toprule
& & \multicolumn{3}{c}{\textbf{Contract-local comparison (pp)}} & & \multicolumn{3}{c}{\textbf{Outcome counts}} \\
\cmidrule(lr){3-5}\cmidrule(lr){7-9}
\textbf{Model} & \textbf{Task} & \textbf{\fullkv{}-Pyr} & \textbf{MII-Pyr} & \textbf{MII-Snap} & \textbf{Margin side} & \textbf{Positive} & \textbf{Nonpositive} & \textbf{Tie} \\
\midrule
\multirow{4}{*}{Llama} & qasper & +9.25 & -0.04 & +1.64 & positive & 30 & 32 & 38 \\
& qmsum & +0.85 & +0.09 & +0.07 & positive & 45 & 42 & 13 \\
& trec & +13.00 & -1.00 & 0.00 & positive & 1 & 2 & 97 \\
& lcc & -0.60 & +0.01 & +0.30 & nonpositive & 11 & 5 & 84 \\
\midrule
\multirow{4}{*}{Mistral} & qasper & +10.56 & +1.05 & +1.51 & positive & 33 & 25 & 42 \\
& qmsum & +0.62 & +0.60 & +0.29 & positive & 52 & 35 & 13 \\
& trec & -29.00 & +3.00 & +1.00 & nonpositive & 10 & 7 & 83 \\
& lcc & +8.36 & +0.33 & -2.84 & positive & 22 & 34 & 44 \\
\midrule
\multirow{4}{*}{Qwen3} & qasper & +9.32 & -0.07 & +2.76 & positive & 34 & 25 & 41 \\
& qmsum & +1.24 & +0.31 & -0.19 & positive & 52 & 40 & 8 \\
& trec & +9.50 & +9.50 & -5.00 & positive & 12 & 3 & 85 \\
& lcc & +1.02 & +0.60 & -0.02 & positive & 50 & 35 & 15 \\
\bottomrule
\end{tabular*}
\begin{tablenotes}[flushleft]
\footnotesize
\item Positive, nonpositive, and tie counts expose the transfer breakdown for each model-task pair.
\end{tablenotes}
\end{threeparttable}
\end{minipage}
\end{nolinenumbers}
\par\vspace{0.10em}

The nonpositive side is asymmetric across contracts. Under SnapKV, the nonpositive margin predicts the weak group, with a $32.4\%$ positive-shift rate and a mean shift of $-0.44$\,pp. Under PyramidKV, the corresponding bucket gives a $56.5\%$ positive-shift rate and a mean $+0.45$\,pp shift. This side remains inconclusive. PyramidKV changes layer allocation and projection at the same time, and the non-monotone-compression observation in \Cref{sec:intro} makes \fullkv{} an unreliable upper bound (SnapKV crosses \fullkv{} on 23 of 96 cells, concentrated in 9 of 18 Few-shot cells and 8 of 12 Code cells); as a result, $H_B\le 0$ gives a weak saturation control for Stage~II error.

The identity checks rule out selector reimplementation as the source of the positive transfer. The additive path keeps the identity scoring branch separate from the value-channel branch, and local unit tests cover exact SnapKV score and selection preservation. For PyramidKV, the $\alpha{=}0$ adapter was checked end-to-end on two cross-instantiation cells at $b{=}0.05$; both produced equal aggregate scores under PyramidKV and the PyramidKV value-channel adapter with $\alpha{=}0$ (deltas $0.0$).

\Cref{tab:baseline_conditional_gap} separates the same cross-contract comparison into conditional rows, aggregate rows, and no-adapter surfaces for SnapKV, PyramidKV, CAKE~\citep{qin2025cake}, Ada-KV~\citep{feng2026adakv}, and H2O~\citep{zhang2023ho}.

\par\vspace{0.10em}
\begin{nolinenumbers}
\noindent\begin{minipage}{\textwidth}
\centering
\captionsetup{type=table,hypcap=false,skip=2pt}
\caption{Per-surface breakdown of \Cref{tab:baseline_viability}.}
\label{tab:baseline_conditional_gap}
\scriptsize
\setlength{\tabcolsep}{2pt}
\renewcommand{\arraystretch}{1.04}
\begin{threeparttable}
\begin{tabular*}{\textwidth}{@{\extracolsep{\fill}}l l l l S[table-format=2.1] c p{0.15\textwidth}@{}}
\toprule
\textbf{Surface $B$} & \textbf{Contract axis} & \textbf{Row type} & \textbf{Rate kind} & \textbf{Rate (\%)} & \textbf{Mean \(\Delta_B\) (pp)} & \textbf{Use} \\
\midrule
\multirow{2}{*}{SnapKV} & \multirow{2}{*}{observation-window scoring} & $H_c>0$ & positive & 72.6 & +0.54 & main validation \\
 &  & $H_c\le 0$ & nonpositive & 67.6 & -0.44 &  \\
\midrule
\multirow{2}{*}{PyramidKV} & \multirow{2}{*}{layer-budget projection} & $H_B>0$ & positive & 79.5 & +0.96 & positive-margin transfer \\
 &  & $H_B\le 0$ & positive & 56.5 & +0.45 &  \\
\midrule
\multirow{2}{*}{CAKE} & \multirow{2}{*}{layer cascade} & all 96 cells & positive & 60.4 & +0.29 & Stage~III surface \\
 &  & NoLev aggregate & \multicolumn{1}{c}{--} & \multicolumn{1}{c}{--} & +0.34 &  \\
\midrule
\multirow{2}{*}{Ada-KV} & \multirow{2}{*}{head-budget allocation} & all 96 cells & positive & 53.1 & +0.16 & Stage~III surface \\
 &  & NoLev aggregate & \multicolumn{1}{c}{--} & \multicolumn{1}{c}{--} & +0.26 &  \\
\midrule
H2O-deb. & count-debiased cumulative support & no adapter row & \multicolumn{1}{c}{--} & \multicolumn{1}{c}{--} & \multicolumn{1}{c}{--} & Stage~I surface \\
H2O & cumulative support & no adapter row & \multicolumn{1}{c}{--} & \multicolumn{1}{c}{--} & \multicolumn{1}{c}{--} & Stage~I surface \\
\bottomrule
\end{tabular*}
\begin{tablenotes}[flushleft]
\footnotesize
\item $m_B=\fullkv{}-B$ is the contract-local reference margin for surface $B$, and $H_B=\operatorname{sign}(m_B)$ gives the split. The rate-kind column states whether each percentage counts positive or nonpositive shifts; the older paragraph-style cells left this distinction implicit.
\end{tablenotes}
\end{threeparttable}
\end{minipage}
\end{nolinenumbers}
\par\vspace{0.10em}

\par\vspace{0.15em}
\Needspace{12\baselineskip}
\refstepcounter{subsection}
\phantomsection
\noindent{\normalsize\bfseries\thesubsection\quad Stage~I Predictability Under Count Debiasing}
\par\nobreak\vspace{0.4ex}
\label{sec:app_stagei_predictability}

\begin{nolinenumbers}
\noindent\begin{minipage}{\textwidth}
\captionsetup{type=figure,hypcap=false,skip=3pt,font=small}
\centering
\includegraphics[width=0.54\linewidth]{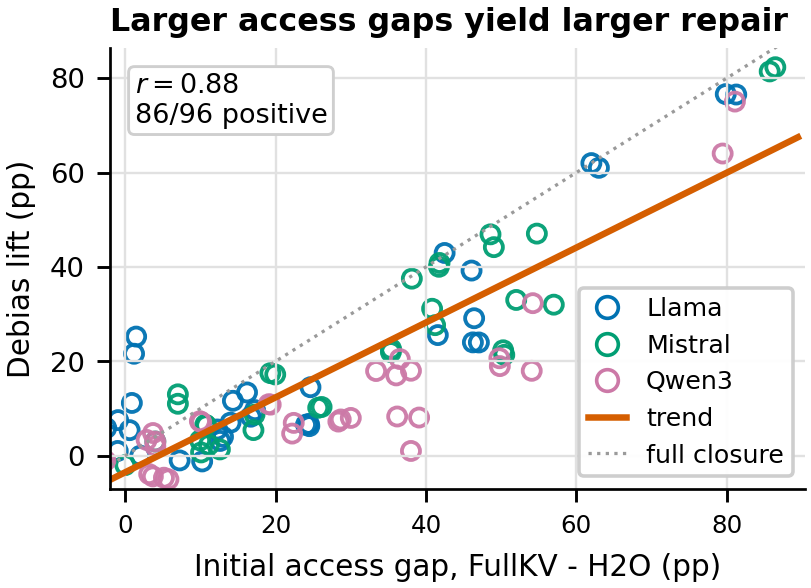}
\caption{Stage~I access gaps predict count-debias repair.}
\label{fig:stagei_predictability}
\end{minipage}
\end{nolinenumbers}
\par\nopagebreak\vspace{0.15em}

Count debiasing of cumulative attention is established prior work~\citep{gu2025ahakv,zhao2025attentiondebiasing}; the contribution here is the predictability contrast across stages. \Cref{fig:stagei_predictability} plots the per-cell debias lift against the initial H2O--\fullkv{} access gap on the LongBench grid. Each point denotes one grid cell, and the scatter has a strong linear trend (correlation $r{=}0.88$, 86 of 96 cells positive, mean lift $+18.4$\,pp), consistent on this 96-cell grid with count-debias repair scaling with the measured Stage~I access gap. The same scatter geometry under Stage~III (\Cref{fig:stageiii_separability}c) gives $r{=}0.14$, the opposite predictability signature. Within-stage repair under count debiasing has been characterized by AhaKV and Attention Debiasing; the cross-stage predictability contrast shown here and in panel (c) of \Cref{fig:stageiii_separability} has not, to our knowledge, been reported.

\section{Additional Confirmatory Controls}
\label{sec:app_additional_controls}

These controls test two limits of the conditional split. One removes the value-consequence term from the additive scalar; the other replaces task accuracy with a boundary-local reference based on next-token distribution change. \Cref{tab:u_channel_ablation,tab:boundary_reference} evaluate the value-consequence channel and the signed reference-margin prediction at disagreement boundaries. Two stress analyses then probe leverage strength and boundary-heavy budget behavior without changing the main 96-cell LongBench grid.

\subsection{Value-Consequence Negative Control}
\label{sec:app_value_channel_negative_control}

\Cref{tab:u_channel_ablation} removes the value-consequence channel from the additive scalar while keeping the access and contract terms. Here $T^a$, $T^c$, and $T^u$ denote the additive access, contract, and value-consequence terms. The within-cell comparison uses the same SnapKV baseline across rows. The channel-positive NoLev reference gives a mean $+0.45$\,pp shift and a $68.9\%$ positive-shift rate, $1.70{\times}$ the best no-value row. The three variants without $T^u$ stay near zero or below zero.

\begin{nolinenumbers}
\noindent\begin{minipage}{\textwidth}
\centering
\captionsetup{type=table,hypcap=false,skip=2pt}
\caption{Removing the value-consequence channel collapses lift toward the no-value ceiling. Companion to \Cref{tab:stage_ablation}.}
\label{tab:u_channel_ablation}
\footnotesize
\setlength{\tabcolsep}{4pt}
\renewcommand{\arraystretch}{1.08}
\begin{threeparttable}
\begin{tabular*}{\textwidth}{@{\extracolsep{\fill}}l c S[table-format=+1.2] S[table-format=2.1] l@{}}
\toprule
\textbf{Method} & \textbf{Retains \(T^u\)} & \textbf{Mean \(\Delta\) from SnapKV (pp)} & \textbf{Positive shift (\%)} & \textbf{Rate / no-value ceiling} \\
\midrule
SnapKV + NoLev & yes & +0.45 & 68.9 & $1.70\times$ \\
\midrule
Addi-$a$, $T^a$ only & no & -0.03 & 39.2 & $0.97\times$ \\
Addi-$c$, $T^c$ only & no & -0.10 & 35.1 & $0.87\times$ \\
Addi-$a{+}c$, $T^a{+}T^c$ & no & -0.04 & 40.5 & $1.00\times$ \\
\bottomrule
\end{tabular*}
\begin{tablenotes}[flushleft]
\footnotesize
\item The first row is the channel-positive reference. The remaining rows are all no-value ablations from the same additive family; exposing the $T^u$ column makes the control axis readable without consulting the surrounding paragraph.
\end{tablenotes}
\end{threeparttable}
\end{minipage}
\end{nolinenumbers}

\par\vspace{0.15em}
\refstepcounter{subsection}
\phantomsection
\noindent{\normalsize\bfseries\thesubsection\quad Boundary-Level Split Controls}
\par\nobreak\vspace{0.4ex}
\label{sec:app_boundary_split_controls}

\Cref{tab:boundary_reference} scores each disagreement boundary by its effect on the next-token distribution under \fullkv{}. Spearman~$\rho$ and pairwise AUC stay near chance for both selectors, indicating weak global ranking alignment. After stratification, the swap-improvement rate rises in the predicted-positive quadrants and stays near $0.5$ in the predicted-nonpositive quadrants, matching the signed reference-margin diagnosis at the boundary level.

The prediction is strongest when the signed reference margin and boundary-level value consequences agree. It weakens when the contract leaves little useful value-channel error to repair, or when high support coupling makes the correction less reliable.

\begin{nolinenumbers}
\noindent\begin{minipage}{\textwidth}
\centering
\scriptsize
\setlength{\tabcolsep}{3pt}
\renewcommand{\arraystretch}{1.04}
\captionsetup{type=table,hypcap=false,skip=2pt}
\caption{Boundary-local alignment rises only in the predicted-positive quadrants under the fixed observation-window contract. Complements \Cref{tab:u_channel_ablation} with disagreement-boundary diagnostics.}
\label{tab:boundary_reference}
\begin{threeparttable}
\begin{tabular*}{\textwidth}{@{\extracolsep{\fill}}l S[table-format=1.2] S[table-format=1.2] S[table-format=1.2] S[table-format=1.2] S[table-format=1.2] S[table-format=1.2] S[table-format=1.2] S[table-format=2.0]@{}}
\toprule
& \multicolumn{4}{c}{\textbf{Spearman $\rho$ with KL reference}} & \multicolumn{3}{c}{\textbf{Pairwise AUC}} & \\
\cmidrule(lr){2-5}\cmidrule(lr){6-8}
\textbf{Group} & \textbf{SnapKV} & \textbf{Support} & \textbf{NoLev} & \textbf{MII} & \textbf{SnapKV} & \textbf{Support} & \textbf{MII} & \textbf{Swap-improving boundaries (\%)} \\
\midrule
All reference prompts & 0.05 & 0.08 & 0.10 & 0.12 & 0.52 & 0.55 & 0.58 & 58 \\
\addlinespace[1pt]
$H_c > 0$ & 0.06 & 0.10 & 0.13 & 0.15 & 0.53 & 0.57 & 0.61 & 62 \\
low-$\phi$, $H_c > 0$ & 0.06 & 0.10 & 0.14 & 0.17 & 0.53 & 0.58 & 0.65 & 65 \\
\addlinespace[1pt]
$H_c \le 0$ & 0.08 & 0.09 & 0.09 & 0.09 & 0.52 & 0.53 & 0.53 & 49 \\
high-$\phi$, $H_c > 0$ & 0.07 & 0.07 & 0.07 & 0.07 & 0.53 & 0.52 & 0.52 & 50 \\
high-$\phi$, $H_c \le 0$ & 0.07 & 0.07 & 0.07 & 0.06 & 0.53 & 0.52 & 0.52 & 48 \\
\bottomrule
\end{tabular*}
\begin{tablenotes}[flushleft]
\footnotesize
\item The final column reports the percentage of disagreement boundaries whose swap lowers KL under the \fullkv{} reference. The global ranking columns remain near chance; the local improvement signal appears only after stratification.
\end{tablenotes}
\end{threeparttable}
\end{minipage}
\end{nolinenumbers}

\par\vspace{0.15em}
\refstepcounter{subsection}
\phantomsection
\noindent{\normalsize\bfseries\thesubsection\quad Targeted Leverage-Strength Sweep}
\par\nobreak\vspace{0.4ex}
\label{sec:app_p31_leverage_damping}

The NoLev-to-MII decomposition treats leverage as a conditional correction. A targeted leverage-strength sweep varies the blending coefficient while keeping tasks, budgets, and methods fixed. \Cref{tab:p31_leverage_damping_summary} summarizes the 96 completed cells for Qwen3~\citep{yang2025qwen3} and Llama~\citep{grattafiori2024llama}.

The sweep does not support a single alpha rule. Across the 32 model--task--budget contracts, the best damped alpha splits across the three tested values with counts 15, 10, and 7 under a lowest-alpha tie break. When the matched $\alpha=1$ reference is included, it remains best in 11 contracts. Absolute and matched-reference conclusions also diverge. 18 of 96 matched cells are positive over SnapKV but worse than $\alpha=1$, while 13 improve over $\alpha=1$ but still lose to SnapKV. These cases make leverage strength a stage-resolved validation surface.

\begin{nolinenumbers}
\noindent\begin{minipage}{\textwidth}
\centering
\captionsetup{type=table,hypcap=false,skip=2pt}
\caption{Leverage-strength sweep over 96 completed cells.}
\label{tab:p31_leverage_damping_summary}
\scriptsize
\setlength{\tabcolsep}{2.2pt}
\renewcommand{\arraystretch}{1.04}
\begin{threeparttable}
\begin{tabular*}{\textwidth}{@{\extracolsep{\fill}}lcccccccccccc@{}}
\toprule
\textbf{Model} &
\textbf{Alpha} &
\textbf{Cells} &
\makecell{\textbf{Raw}\\\textbf{MII}} &
\textbf{Wins} &
\textbf{Losses} &
\textbf{Ties} &
\textbf{NoLev} &
\textbf{Leverage} &
\textbf{Rescues} &
\textbf{Breaks} &
\textbf{Taxes} &
\makecell{\textbf{Support}\\\textbf{failures}} \\
\midrule
Llama & 0.25 & 16 & +0.015 & 8 & 5 & 3 & -0.158 & +0.173 & 2 & 1 & 1 & 6 \\
Llama & 0.50 & 16 & -0.622 & 5 & 10 & 1 & -0.313 & -0.309 & 1 & 2 & 2 & 9 \\
Llama & 0.75 & 16 & -0.481 & 7 & 8 & 1 & -0.435 & -0.046 & 0 & 0 & 5 & 9 \\
Qwen3 & 0.25 & 16 & +0.168 & 8 & 6 & 2 & +0.115 & +0.053 & 1 & 1 & 3 & 7 \\
Qwen3 & 0.50 & 16 & +0.450 & 9 & 5 & 2 & +0.464 & -0.014 & 0 & 1 & 4 & 6 \\
Qwen3 & 0.75 & 16 & +0.414 & 8 & 6 & 2 & +0.887 & -0.473 & 0 & 3 & 5 & 5 \\
\bottomrule
\end{tabular*}
\begin{tablenotes}[flushleft]
\footnotesize
\item Raw MII and NoLev are mean percentage-point gains over SnapKV. Leverage is Raw MII minus NoLev. Breaks are sign-flipping leverage failures; taxes are support-win cases where leverage reduces an otherwise positive NoLev result.
\end{tablenotes}
\end{threeparttable}
\end{minipage}
\end{nolinenumbers}

\par\vspace{0.35em}
\refstepcounter{subsection}
\phantomsection
\noindent{\normalsize\bfseries\thesubsection\quad Boundary-Heavy Diagnostic Subset}
\par\nobreak\vspace{0.4ex}
\label{sec:app_p30_failuremix_diagnostics}

We also evaluate boundary-heavy diagnostic cases as a stress test for the same fixed-contract mechanism. Their role is to expose reference-margin and budget-axis failures around the selector boundary, where the completed LongBench grid already shows that final averages can hide stage-specific behavior.

\Cref{fig:budget_phase} shows that retained budget behaves like a phase variable in these cases. Raw MII is monotone nondecreasing in only 6 of 36 complete four-budget trajectories, and budget \(0.05\) is a strict local dip in 12 of 36. Budget \(0.10\) is the strongest fixed default, with \(+1.066\) pp, 22 wins, 10 losses, and 4 ties, but it captures only \(55.5\%\) of the per-trajectory oracle-best mean and is exact-best or tied in only 16 of 36 trajectories. The largest adjacent transitions occur in Qwen3 TREC, Qwen3 passage retrieval, and Mistral TREC, with ranges from \(6.250\) to \(13.281\) pp.

\begin{nolinenumbers}
\noindent\begin{minipage}{\textwidth}
\captionsetup{type=figure,hypcap=false,skip=3pt}
\centering
\includegraphics[width=0.96\linewidth]{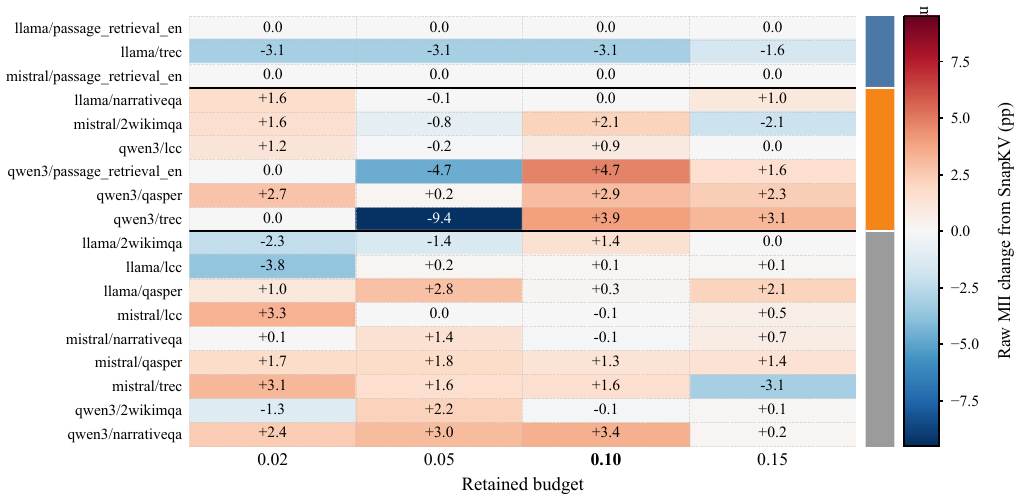}
\caption{Boundary-heavy budget trajectory heatmap.}
\label{fig:budget_phase}
\end{minipage}
\end{nolinenumbers}

\Cref{tab:p30_boundary_summary} compresses the remaining boundary-heavy checks. Raw MII can exceed \fullkv{} when the fixed-contract reference margin is negative, most visibly on Mistral~\citep{jiang2024mistral} TREC. Grouping exact Llama~\citep{grattafiori2024llama}, Mistral, and Qwen3~\citep{yang2025qwen3} by the same task and budget gives 6 of 24 contracts where raw MII is positive for all three models and none where it is negative for all three. Leverage is never uniformly positive; it is uniformly negative in four contracts, uniformly zero in six, and mixed in 14. The portable signal lies in support and value realizability, while leverage remains task-, model-, and budget-conditioned.

\par\vspace{0.10em}
\begin{nolinenumbers}
\noindent\begin{minipage}{\textwidth}
\centering
\captionsetup{type=table,hypcap=false,skip=2pt}
\caption{Boundary-heavy reference-margin and cross-model sign checks.}
\label{tab:p30_boundary_summary}
\scriptsize
\setlength{\tabcolsep}{3pt}
\renewcommand{\arraystretch}{1.04}
\begin{threeparttable}
\begin{tabular*}{\textwidth}{@{\extracolsep{\fill}}lcccccc@{}}
\toprule
\multicolumn{7}{c}{\textbf{Representative FullKV-reference counterexamples}} \\
\midrule
\textbf{Source} & \textbf{Model} & \textbf{Task} & \textbf{Budget} & \textbf{\(m_c\)} & \textbf{MII--SnapKV} & \textbf{MII--FullKV} \\
\midrule
Boundary-heavy & Mistral & trec & 0.02 & -20.31 & +3.12 & +23.44 \\
Boundary-heavy & Mistral & trec & 0.05 & -17.19 & +1.56 & +18.75 \\
Boundary-heavy & Qwen3 & passage\_retrieval\_en & 0.02 & -9.38 & +0.00 & +9.38 \\
Boundary-heavy & Mistral & trec & 0.10 & -4.69 & +1.56 & +6.25 \\
Boundary-heavy & Mistral & trec & 0.15 & -6.25 & -3.12 & +3.12 \\
\bottomrule
\end{tabular*}
\vspace{0.35em}
\begin{tabular*}{\textwidth}{@{\extracolsep{\fill}}lcccccc@{}}
\toprule
\multicolumn{7}{c}{\textbf{Cross-model sign contracts over 24 matched task-budget contracts}} \\
\midrule
\textbf{Signal} & \makecell{\textbf{All}\\\textbf{positive}} & \makecell{\textbf{All}\\\textbf{negative}} & \makecell{\textbf{All}\\\textbf{zero}} & \textbf{Mixed} & \textbf{Mean} & \textbf{Spread} \\
\midrule
Ref. margin & 10 & 0 & 0 & 14 & +2.38 & +7.34 \\
Support & 1 & 0 & 0 & 23 & +0.05 & +2.64 \\
NoLev & 6 & 0 & 1 & 17 & +0.43 & +2.84 \\
MII & 6 & 0 & 1 & 17 & +0.38 & +3.33 \\
Leverage & 0 & 4 & 6 & 14 & -0.05 & +1.28 \\
\bottomrule
\end{tabular*}
\begin{tablenotes}[flushleft]
\footnotesize
\item \(m_c=\fullkv{}_c-\mathrm{Host}_c\), in percentage points; in these SnapKV-contract rows, \(\mathrm{Host}_c=\mathrm{SnapKV}_c\). The upper block lists boundary-heavy rows where raw MII exceeds \fullkv{} despite negative \(m_c\), so \fullkv{} defines the signed reference margin. The lower block groups exact Llama, Mistral, and Qwen3 by the same task and budget.
\end{tablenotes}
\end{threeparttable}
\end{minipage}
\end{nolinenumbers}
\par\vspace{0.10em}

\subsection{Scope and Limitations}
\label{sec:app_additional_controls_scope}

These controls diagnose the existing selector contract. The value-channel ablation tests whether the additive score still works after removing $T^u$; the boundary reference tests whether disagreement boundaries align with a token-distribution proxy. The leverage and budget stress analyses are diagnostic surfaces for budget phase and leverage strength. They stay within the fixed observation-window contract and do not replace the matched LongBench grid.

\section{Limitations}
\label{sec:app_limitations}

The diagnosis is local to a selector contract. Our cleanest claim is under the SnapKV~\citep{li2024snapkv} observation-window contract, where access estimation and projection are fixed and only the ranking scalar is changed. The cross-contract rows in \Cref{tab:baseline_viability} test transfer under contract drift. Once the access window, allocation rule, or projection map changes, the experiment no longer isolates the same stage.

The matched LongBench grid covers three 7B-class instruction models, two budgets, and 100 samples per cell. Coverage at larger scale, longer context, and other benchmark formats remains open.

The diagnostic assumes access to prefill attention and value states. Serving stacks that quantize the cache before selection or use fused kernels without exposing intermediate states need a different implementation route. The paper also restricts attention to non-reconstructing eviction, so quantization, low-rank reconstruction, compensation tokens, and cache offloading need separate fixed-contract tests before the same attribution logic applies.

\section{Broader Impacts}
\label{sec:app_broader_impacts}

KV-cache compression is primarily an efficiency tool. Better cache-selection diagnostics can lower serving cost and energy use, make long-context models easier to run on constrained hardware, and reduce over-provisioning when a smaller retained cache is sufficient. These savings are not guaranteed to reduce aggregate compute demand: cheaper inference can also increase total usage.

The main deployment risk is silent loss of evidence. An eviction rule can remove states needed for faithful generation, especially in multi-document, code, legal, medical, or retrieval-heavy prompts where related facts must survive together. The staged diagnosis separates value-ranking errors from access-support and projection failures; aggressive compression in high-stakes settings still requires task-specific validation.

The experiments use public benchmarks and publicly released pretrained models. They do not collect private data, train new foundation models, or add surveillance, persuasion, or content-generation capabilities. In deployment, cache compression should be paired with task-specific evaluation, conservative fallbacks to larger caches or \fullkv{}, and monitoring for prompts where projection or multi-target support closure becomes the bottleneck.

\par\bigskip

\end{document}